\documentclass[11pt]{article}
\usepackage[top=30truemm,bottom=30truemm,left=25truemm,right=25truemm]{geometry}
\usepackage{parskip}
\usepackage[utf8]{inputenc}
\usepackage[T1]{fontenc}
\usepackage{tgtermes}
\usepackage{microtype}
\usepackage{amsmath}
\usepackage{amssymb}
\usepackage{bbm}
\usepackage{amsthm}
\usepackage{mathtools}

\usepackage[authoryear,round]{natbib}

\usepackage[ruled,vlined,linesnumbered]{algorithm2e}

\usepackage{xcolor}
\newcommand{\e}{\mathrm{e}}
\usepackage[hypertexnames=false,backref,pagebackref]{hyperref}
\hypersetup{
  colorlinks=true,
  linkcolor=blue,
  citecolor=blue,
}
\usepackage[nameinlink,capitalise,noabbrev]{cleveref}
\newcommand{\fixcrefthm}[1]{%
  \AddToHook{env/#1/begin}{\crefalias{theorem}{#1}}%
}
\fixcrefthm{theorem}
\fixcrefthm{lemma}
\fixcrefthm{proposition}
\fixcrefthm{corollary}
\fixcrefthm{assumption}
\fixcrefthm{definition}
\fixcrefthm{remark}

\crefname{equation}{}{}
\crefname{theorem}{Theorem}{Theorems}
\Crefname{theorem}{Theorem}{Theorems}
\crefname{lemma}{Lemma}{Lemmas}
\Crefname{lemma}{Lemma}{Lemmas}
\crefname{corollary}{Corollary}{Corollaries}
\Crefname{corollary}{Corollary}{Corollaries}
\crefname{definition}{Definition}{Definitions}
\Crefname{definition}{Definition}{Definitions}
\crefname{remark}{Remark}{Remarks}
\Crefname{remark}{Remark}{Remarks}
\crefname{assumption}{Assumption}{Assumptions}
\Crefname{assumption}{Assumption}{Assumptions}
\crefname{proposition}{Proposition}{Propositions}
\Crefname{proposition}{Proposition}{Propositions}
\Crefname{algocf}{Algorithm}{Algorithms}
\crefname{appendix}{Appendix}{Appendices}
\Crefname{appendix}{Appendix}{Appendices}

\SetArgSty{textnormal}
\SetKwInOut{KwRequire}{Require}
\crefname{line}{Line}{Lines}
\Crefname{line}{Line}{Lines}
\renewcommand{\nllabel}[1]{%
  \addtocounter{AlgoLine}{-1}%
  \refstepcounter{AlgoLine}%
  \label[line]{#1}%
}

\usepackage{textcase}
\usepackage{graphicx}
\graphicspath{{figure/}}
\usepackage{enumitem}
\usepackage{booktabs}
\usepackage{pdflscape}

\usepackage{wrapfig}
\usepackage{tikz}

\theoremstyle{plain}
\newtheorem{theorem}{Theorem}[section]
\newtheorem{lemma}[theorem]{Lemma}

\newtheorem{assumption}[theorem]{Assumption}
\theoremstyle{definition}

\theoremstyle{remark}

\newcommand{\calA}{\mathcal{A}}

\newcommand{\calF}{\mathcal{F}}

\newcommand{\calK}{\mathcal{K}}

\newcommand{\calM}{\mathcal{M}}

\newcommand{\calP}{\mathcal{P}}

\newcommand{\calS}{\mathcal{S}}

\renewcommand{\hat}{\widehat}
\renewcommand{\epsilon}{\varepsilon}

\def\E{\mathbb{E}}

\def\I{\mathbb{I}}
\newcommand{\ind}{\mathbbm{1}}
\def\R{\mathbb{R}}

\def\N{\mathbb{N}}

\DeclareMathOperator*{\argmin}{arg\,min}

\newcommand{\tilO}{\widetilde{O}}
\DeclarePairedDelimiter{\abs}{\lvert}{\rvert}
\DeclarePairedDelimiter{\inpr}{\langle}{\rangle}
\DeclarePairedDelimiter{\brk}{[}{]}
\DeclarePairedDelimiter{\prn}{(}{)}
\DeclarePairedDelimiter{\set}{\{}{\}}
\DeclarePairedDelimiter{\nrm}{\|}{\|}

\newcommand{\pio}{\mathring{\pi}}
\renewcommand{\hat}{\widehat}

\allowdisplaybreaks[1]
\usepackage{autonum}  %

\SetKwInput{Require}{Require}
\SetKwInput{Return}{Return}

\usepackage{tabularx,array}
\title{Closing the Horizon Gap in Policy Optimization\\
for Adversarial MDPs}
\author{
  Mingyi Li\footnote{
    The University of Tokyo;
    \texttt{mingyi-mike@g.ecc.u-tokyo.ac.jp}.
  }
  \and
  Taira Tsuchiya\footnote{
    The University of Tokyo, The University of Osaka, and RIKEN;
    \texttt{tsuchiya@mist.i.u-tokyo.ac.jp}.
  }
}

\date{October 8, 2026}

\begin{document}
\maketitle

\begin{abstract}
We consider policy optimization for online episodic tabular Markov decision processes (MDPs) with adversarial losses and bandit feedback.
Policy optimization updates the policy locally at each state and avoids optimization over the occupancy-measure polytope, but its existing regret bounds are larger by a factor of the horizon $H$ than those of occupancy-measure-based algorithms.
We close this gap by using regularized $Q$-functions, which allow us to control the stability of the local updates jointly over all state-action pairs rather than separately at each state.
The resulting algorithm attains high-probability regret bounds of $\widetilde O(\sqrt{HS(H+A)T})$ for known transitions and $\widetilde O(HS\sqrt{AT})$ for unknown transitions, where $S$ is the number of states, $A$ the number of actions, and $T$ the number of episodes.
Both bounds improve the horizon dependence of existing policy optimization bounds, and the latter matches the best-known bound.
We further extend the algorithm to adversarial linear-mixture MDPs and obtain the same improvement in the horizon dependence.
\end{abstract}

\section{Introduction}
We study policy optimization for online learning in episodic tabular Markov decision processes (MDPs) with adversarial losses and bandit feedback.
Policy optimization is widely used in reinforcement learning and has achieved empirical success (\textit{e.g.,}~\citealp{schulman2015trust,schulman2017proximal}).
In each of $T$ episodes, the environment chooses a loss function, and the learner chooses and executes a policy, observing losses only at the visited state-action pairs.
The learner aims to minimize the regret, the gap between the cumulative loss of the learner and that of the best fixed policy in hindsight.

Algorithms for online learning in episodic tabular MDPs typically use either \emph{global optimization} or \emph{policy optimization}.
Global optimization solves an optimization problem over the occupancy-measure polytope in each episode.
With this approach, \citet{zimin2013online} achieve minimax-optimal regret $\widetilde O(\sqrt{HSAT})$ for known transitions, and \citet{jin2020learning} achieve the best-known regret bound of $\widetilde O(HS\sqrt{AT})$ for unknown transitions, where $H$ is the horizon, $S$ the number of states, and $A$ the number of actions.
However, global optimization can be computationally demanding for large MDPs.

Policy optimization is a computationally efficient alternative, which updates the action distribution locally at each state~\citep{shani2020optimistic,luo2021policy}.
The best-known regret bounds for policy optimization are $\widetilde O(\sqrt{H^3SAT})$ for known transitions and $\widetilde O(H^2S\sqrt{AT})$ for unknown transitions~\citep{luo2021policy}.
These bounds have an extra factor of the horizon $H$ compared with those of global optimization.
Later work on policy optimization has extended the analysis of \citet{luo2021policy} to delayed feedback, linear transition models, and data-dependent regret bounds, but removing the extra horizon factor remains an open problem.
This leads to the following natural question.
\begin{center}
\emph{Is the extra horizon dependence intrinsic to policy optimization, or can it be improved?}
\end{center}

\subsection{Contributions of this paper}\label{sec:contributions}

We answer this question by showing that the extra horizon dependence can be improved.
We prove that policy optimization achieves high-probability regret bounds of $\widetilde O(\sqrt{HS(H+A)T})$ for known transitions (\cref{thm:known}) and $\widetilde O(HS\sqrt{AT})$ for unknown transitions (\cref{thm:unknown}).
In both settings, we improve the horizon dependence of the regret bounds of \citet{luo2021policy}.
For unknown transitions, the leading term improves by a factor of $H$ and matches the best-known regret upper bound for this setting.
\cref{tab:comparison} compares these bounds with existing results.

To obtain these bounds, we exploit regularized $Q$-functions (see \cref{eq:tildeQdef,eq:tildeVdef}) to construct and analyze our policy updates.
Replacing the ordinary $Q$-functions with regularized ones yields more optimistic $Q$-function estimates while retaining local policy updates at each state.
The regularized value at each state depends recursively on the regularized values in the next layer, and thus optimism propagates backward across layers.
Our algorithm updates the policy by exponential weights with regularized $Q$-functions and covers both known and unknown transitions.

\begin{table}[t]
\centering
\small
\setlength{\tabcolsep}{8pt}
\setlength{\belowcaptionskip}{8pt}
\setlength{\parskip}{0pt}
\renewcommand{\arraystretch}{1.2}
\caption{Regret upper bounds for episodic tabular MDPs with adversarial losses and bandit feedback under known and unknown transitions. Logarithmic factors and lower-order terms are omitted.}
\begin{tabular}{@{}lccc@{}}
\toprule
Reference & Known transitions & Unknown transitions & Policy optimization \\
\midrule
\citet{zimin2013online} & $\sqrt{HSAT}$ & --- & --- \\
\citet{jin2020learning} & --- & $HS\sqrt{AT}$ & --- \\
\citet{luo2021policy} & $\sqrt{H^3SAT}$ & $H^2S\sqrt{AT}$ & $\checkmark$ \\
\midrule
\textbf{This work} & $\sqrt{HS(H+A)T}$ & $HS\sqrt{AT}$ & $\checkmark$ \\
\bottomrule
\end{tabular}
\label{tab:comparison}
\end{table}

\begin{table}[t]
\centering
\small
\setlength{\tabcolsep}{8pt}
\setlength{\belowcaptionskip}{8pt}
\setlength{\parskip}{0pt}
\renewcommand{\arraystretch}{1.2}
\caption{Regret upper bounds for adversarial linear-mixture MDPs with bandit feedback. Logarithmic factors and lower-order terms are omitted.}
\begin{tabular}{@{}lcc@{}}
\toprule
Reference & Regret upper bound & Policy optimization \\
\midrule
\citet{lzz2024} & $\sqrt{HSAT}+dS^{3/2}\sqrt{HT}$ & --- \\
\citet{cheng2026policy} & $\sqrt{H^3SAT}+dS^{3/2}\sqrt{H^3T}$ & $\checkmark$ \\
\midrule
\textbf{This work} & $\sqrt{HSAT}+dS^{3/2}\sqrt{HT}$ & $\checkmark$ \\
\bottomrule
\end{tabular}
\label{tab:lm-comparison}
\end{table}

Regularized $Q$-functions allow us to control the stability of the algorithm jointly over all state-action pairs $(s,a)$.
Existing analyses of policy optimization first use the performance difference lemma~\citep{kakade2002approximately} to express the regret as a weighted sum of local regrets, one for each state.
They then control the stability of each local policy update separately.
Since $Q$-function estimators aggregate losses over the remaining horizon, separate stability bounds can repeatedly count the same downstream losses.
This repeated accounting leads to the extra horizon dependence in the resulting regret bounds.
In contrast, we decompose the regret jointly over all state-action pairs, as in the occupancy-measure approach.
The regularized $Q$-function connects the local policy updates across layers, allowing us to analyze their stability together.
The overall stability term then reduces to the difference between the ordinary and regularized value functions at the initial state.
We bound this difference using the expected square of the cumulative estimated loss along a trajectory.

We relate the regularized $Q$-function to existing policy optimization and global optimization methods (\cref{sec:bonus,sec:occupancy}).
The optimism of the regularized $Q$-function can be interpreted as an implicit exploration bonus in the local policy update, connecting our method to policy optimization with exploration bonuses~\citep{luo2021policy}.
We also interpret our algorithm's policy update as optimization over occupancy measures with a regularizer induced by negative conditional entropy, and thus relate it to global optimization methods~\citep{zimin2013online,jin2020learning}.

In \cref{sec:linear-mixture}, we extend the same algorithm to linear-mixture MDPs and obtain a high-probability regret bound of $\widetilde O(\sqrt{HSAT}+dS^{3/2}\sqrt{HT})$ (\cref{thm:lm}), where $d$ is the dimension of the feature map. This improves the horizon dependence of the policy optimization bound of \citet{cheng2026policy} by a factor of $H$ and matches the best-known bound of \citet{lzz2024}. \cref{tab:lm-comparison} compares this bound with existing results for linear-mixture MDPs.

\section{Related work}
\paragraph{Adversarial MDPs.}
Adversarial tabular MDPs were studied in the early work of \citet{even2009online,yu2009Markov}, and later in the episodic setting.
For known transitions and bandit feedback, \citet{zimin2013online} use global optimization and achieve regret $\widetilde O(\sqrt{HSAT})$, which matches the lower bound $\Omega(\sqrt{HSAT})$ up to logarithmic factors.
For unknown transitions, global optimization achieves regret $\widetilde O(HS\sqrt{AT})$ under full-information feedback~\citep{rosenberg2019online} and bandit feedback~\citep{jin2020learning}, leaving a $\sqrt S$ gap from the lower bound $\Omega(H\sqrt{SAT})$~\citep{jin2018q,domingues2021episodic}.

For policy optimization under bandit feedback, \citet{shani2020optimistic} establish a regret bound with $\widetilde O(T^{2/3})$ dependence on $T$.
Using dilated exploration bonuses, \citet{luo2021policy} improve this dependence to $\widetilde O(\sqrt T)$ and obtain regret bounds of $\widetilde O(\sqrt{H^3SAT})$ for known transitions and $\widetilde O(H^2S\sqrt{AT})$ for unknown transitions.
Both bounds incur an extra factor of $H$ compared with the corresponding bounds for global optimization.
Under full-information feedback and unknown transitions, this horizon gap can be closed by a policy optimization algorithm with regret $\widetilde O(HS\sqrt{AT})$~\citep{li2026policy}.
Aiming instead to improve the dependence on $S$, \citet{tiapkin2025narrowing} use policy optimization and achieve regret $\widetilde O(H^3\sqrt{SAT})$ in the same setting.

\paragraph{The horizon gap in policy optimization.}
The same horizon gap appears in the regret bounds of policy optimization in many other settings.
Under delayed bandit feedback, where losses from an episode are revealed only after a delay, global optimization achieves regret $\widetilde O(\sqrt{HSAT}+H(SA)^{1/4}\sqrt D)$ and $\widetilde O(HS\sqrt{AT}+H(SA)^{1/4}\sqrt D)$ for known and unknown transitions, respectively, where $D$ denotes the total delay~\citep{jin2022near}.
The corresponding policy optimization bounds are $\widetilde O(\sqrt{H^3SAT}+H^3\sqrt{T+D})$ and $\widetilde O(H^2S\sqrt{AT}+H^3\sqrt D)$~\citep{lancewicki2023delay}.
Under preference feedback, where the learner observes comparisons between arms rather than numerical losses, global and policy optimization achieve regret $\widetilde O((H^2S^2K)^{1/3}T^{2/3})$ and $\widetilde O((H^6SK^5)^{1/3}T^{2/3})$, respectively, for adversarial preferences with Borda-score rewards, known transitions, and $K$ arms~\citep{tsuchiya2025reinforcement}.

Under aggregate bandit feedback, the learner observes only the total loss of each trajectory rather than the losses at the visited state-action pairs.
In contrast to the settings above, policy optimization is well suited to this feedback and achieves regret $\widetilde O(\sqrt{H^3SAT}+H^3\sqrt T)$ for known transitions and $\widetilde O(H^2S\sqrt{AT})$ for unknown transitions~\citep{lancewicki2025near}.
For known transitions, this bound matches the lower bound $\Omega(\sqrt{H^3SAT})$ and the best-known bound $\widetilde O(\sqrt{H^3SAT})$ of global optimization~\citep{ito2025adapting} up to logarithmic factors when $SA\ge H^3$.

Other studies develop best-of-both-worlds policy optimization algorithms, which achieve $\mathrm{polylog}(T)$ regret for stochastic losses and $\widetilde O(\sqrt T)$ regret for adversarial losses without knowing the type of losses~\citep{dann2023best}, and some of which further achieve data-dependent regret bounds that exploit the structure of the loss sequence~\citep{li2026data,li2026policy}.
Under bandit feedback, their leading worst-case terms remain $\widetilde O(\sqrt{H^3SAT})$ for known transitions and $\widetilde O(H^2S\sqrt{AT})$ for unknown transitions, which are the same as those of \citet{luo2021policy}.
These methods therefore have the same extra factor of $H$ relative to global optimization.

Linear-mixture MDPs assume that the transition kernel is linear in a known $d$-dimensional feature map, allowing the learner to exploit this structure to generalize across states and actions~\citep{ayoub2020model,zhou2021nearly}.
For adversarial linear-mixture MDPs with unknown transitions and bandit feedback, global optimization achieves regret $\widetilde O(\sqrt{HSAT}+dS^{3/2}\sqrt{HT})$~\citep{lzz2024}.
In the same setting, policy optimization achieves regret $\widetilde O(\sqrt{H^3SAT}+dS^{3/2}\sqrt{H^3T})$~\citep{cheng2026policy}.
This bound also incurs an extra factor of $H$ in both the loss-estimation and transition-estimation terms compared with the global optimization bound.

\paragraph{Regularized $Q$-functions.}
Regularized $Q$-functions have been studied in stochastic control and reinforcement learning.
In stochastic optimal control, policy improvement with regularized $Q$-functions converges to the optimal cost~\citep{rawlik2012stochastic}.
Similarly, in reinforcement learning, a regularized $Q$-function formulation yields dynamic policy programming for finding an optimal policy in discounted MDPs~\citep{azar2012dynamic}.
Beyond these specific algorithms, a general theory of regularized MDPs analyzes how approximation errors affect policy performance~\citep{geist2019theory}.
These regularized policy updates are also connected to occupancy measures.
\citet{neu2017unified} relate conditional-entropy regularization of occupancy measures to policy updates and interpret the finite-horizon update of \citet{rawlik2012stochastic} as mirror descent.
Unlike these works, we consider adversarial losses and use regularized $Q$-functions to analyze the stability of policy optimization under bandit feedback in (non-regularized) episodic MDPs.

\section{Preliminaries}\label{sec:prelim}
We consider an episodic tabular Markov decision process (MDP) $\calM=(\calS,\calA,P,H,s_0)$, where $\calS$ is a finite state space with $S=\abs{\calS}$ states, $\calA$ is a finite action space with $A=\abs{\calA}$ actions, $H$ is the episode horizon, and $s_0$ is the initial state. The transition kernel $P$ is fixed but may be unknown to the learner, where $P(s'\mid s,a)$ is the probability of reaching state $s'$ after taking action $a$ in state $s$.
Following the standard layered formulation~\citep{neu2010online,jin2020learning,luo2021policy}, we assume that $\calS$ is partitioned into $H+1$ nonempty disjoint layers $\calS_0,\ldots,\calS_H$, with initial layer $\calS_0=\{s_0\}$ and terminal absorbing layer $\calS_H=\{s_H\}$. For $s\in\calS_h$, let $h(s)=h$. Transitions only connect consecutive layers: $P(\cdot\mid s,a)$ is a probability distribution on $\calS_{h+1}$ for every $(s,a)\in\calS_h\times\calA$. For convenience, we exclude the terminal layer $\calS_H$ from $\calS$ in what follows, so that $H\leq S$.

A policy is a mapping $\pi:\calS\to\Delta(\calA)$, where $\Delta(\calK)$ denotes the set of probability distributions over a finite set $\calK$. The learner interacts with the MDP for $T$ episodes. At the beginning of episode $t$, the adversarial environment selects a loss function $\ell_t:\calS\times\calA\to[0,1]$, which may depend on the learner's algorithm and past history, but not on the randomness of the current episode. The learner then chooses $\pi_t$ based on past observations, executes it under $P$ from $s_{t,0}=s_0$, and observes only the visited state-action pairs and their losses $\{(s_{t,h},a_{t,h},\ell_t(s_{t,h},a_{t,h}))\}_{h=0}^{H-1}$.

For a policy $\pi$ and a function $f:\calS\times\calA\to\R$, define recursively the value function $V^\pi(s;f)$ and the $Q$-function $Q^\pi(s,a;f)$ under the transition kernel $P$ as $V^\pi(s;f)=\E_{a\sim\pi(\cdot\mid s)}\brk*{Q^\pi(s,a;f)}$ and $Q^\pi(s,a;f)=f(s,a)+\E_{s'\sim P(\cdot\mid s,a)}\brk*{V^\pi(s';f)}$, with $V^\pi(s_H;f)=0$.
For a policy $\pi$, let $q^{\pi}(s)$ and $q^{\pi}(s,a)$ denote the probabilities of visiting $s$ and $(s,a)$ under the transition kernel $P$ from $s_0$, respectively, and let $q^{\pi}(s'\mid s,a)$ and $q^{\pi}(s',a'\mid s,a)$ denote the visitation probabilities starting from $(s,a)$, which are zero when $h(s')<h(s)$.
The goal of the learner is to minimize the regret
\begin{align}\label{eq:regret}
\operatorname{Reg}_T\coloneqq\sum_{t=1}^T V^{\pi_t}(s_0;\ell_t)-\min_{\pi}\sum_{t=1}^T V^{\pi}(s_0;\ell_t),
\end{align}
and let $\pio\in\argmin_{\pi}\sum_{t=1}^T V^{\pi}(s_0;\ell_t)$ be an optimal fixed comparator policy in hindsight.

\paragraph{Additional notation.}
For $N\in\N$, let $[N]=\{1,\ldots,N\}$. We denote the indicator of an event $E$ by $\ind\{E\}$ and use $\log$ for the natural logarithm. For $u,p\in\Delta(\calA)$, we define their Kullback--Leibler divergence as $\operatorname{KL}(u\parallel p)=\sum_{a\in\calA}u(a)\log(u(a)/p(a))$, with the conventions $0\log(0/p)=0$ and $u\log(u/0)=+\infty$ for $u>0$. For nonnegative functions $f$ and $g$, we write $f\lesssim g$ or $f=O(g)$ if $f\leq Cg$ for some universal constant $C>0$, and use $\tilO(\cdot)$ to hide polylogarithmic factors.
Let $\calF_t$ be the sigma-algebra generated by the history up to the end of episode $t-1$, and write $\E_t[\cdot]=\E[\cdot\mid\calF_t]$. Define $\I_t(s,a)=\ind\{(s_{t,h},a_{t,h})=(s,a),\exists h\in\{0,\ldots,H-1\}\}$ to indicate whether $(s,a)$ is visited in episode $t$, with $\I_t(s)=\sum_{a\in\calA}\I_t(s,a)$.

\subsection{Confidence sets}\label{sec:confidence}
We consider both the setting in which the transition kernel $P$ is known to the learner and the setting in which it is unknown.
When $P$ is unknown, we use confidence sets for $P$, following \citet{jin2020learning}. Let $n_t(s,a)$ and $n_t(s,a,s')$ count visits to $(s,a)$ and transitions from $(s,a)$ to $s'$ before episode $t$. Define $\hat P_t(s'\mid s,a)=n_t(s,a,s')/n_t(s,a)$ when $n_t(s,a)>0$, and let $\hat P_t(\cdot\mid s,a)$ be uniform on $\calS_{h+1}$ when $n_t(s,a)=0$.
For $\delta\in(0,1)$, define the confidence set
\begin{align}
\calP_t&=\Bigl\{\widetilde P:
\abs*{\widetilde P(s'\mid s,a)-\hat P_t(s'\mid s,a)}\leq \min\biggl\{2\sqrt{\frac{\hat P_t(s'\mid s,a)\log\prn*{\frac{SAT}{\delta}}}{\max\{1,n_t(s,a)\}}}+\frac{14\log\prn*{\frac{SAT}{\delta}}}{3\max\{1,n_t(s,a)\}},1\biggr\},\\
&\qquad\forall h, \forall (s,a,s')\in\calS_h\times\calA\times\calS_{h+1}, \widetilde P(\cdot\mid s,a) \in \Delta(\calS_{h+1})\Bigr\}. \label{eq:confrow}
\end{align}
By \citet[Lemma~2]{jin2020learning}, $P\in\calP_t$ for all $t\in[T]$ with probability at least $1-4\delta$.
The constraints in \eqref{eq:confrow} are imposed on each $(s,a)$ separately, and thus $\calP_t$ is rectangular. We write $\calP_t(s,a)\subseteq\Delta(\calS_{h+1})$ for the confidence set of $P(\cdot\mid s,a)$, and $\calP_t=\prod_{s,a}\calP_t(s,a)$.
When $P$ is known, we set $\calP_t=\{P\}$.
For a transition kernel $\widetilde P$ and a policy $\pi$, let $q^{\widetilde P,\pi}(s)$ and $q^{\widetilde P,\pi}(s,a)$ denote the probabilities of visiting $s$ and $(s,a)$. Define the upper and lower occupancies by
\begin{align}\label{eq:occupancy}
\overline q_t^\pi(s)=\max_{\widetilde P\in\calP_t}q^{\widetilde P,\pi}(s),\qquad
\underline q_t^\pi(s)=\min_{\widetilde P\in\calP_t}q^{\widetilde P,\pi}(s),
\end{align}
and set $\overline q_t^\pi(s,a)=\overline q_t^\pi(s)\pi(a\mid s)$ and $\underline q_t^\pi(s,a)=\underline q_t^\pi(s)\pi(a\mid s)$. 
The upper and lower occupancies can be computed efficiently by dynamic programming over the layers~\citep[Algorithm~3]{jin2020learning}.
Similarly, $Q^{\widetilde P,\pi}(s,a;\cdot)$ and $V^{\widetilde P,\pi}(s;\cdot)$ denote the corresponding functions under a transition kernel $\widetilde P$, and the superscript $\widetilde P$ is omitted when $\widetilde P=P$.

\section{Regularized $Q$-functions}\label{sec:technical}
In this section, we define the regularized $Q$-function used in our algorithm and explain its connections to exploration bonuses and global optimization over occupancy measures.
Existing policy optimization algorithms estimate the $Q$-function and then subtract an exploration bonus from the estimate to make it optimistic~\citep{luo2021policy,dann2023best,li2026policy}.
We instead incorporate this correction directly into the $Q$-function itself and update the policy with the regularized $Q$-function.

For a policy $\pi$, a function $f:\calS\times\calA\to\R$, and a learning rate $\eta>0$, we define the \emph{regularized $Q$-function} $\widetilde Q^{\pi}(s,a;f)$ and the \emph{regularized value function} $\widetilde V^{\pi}(s;f)$ under the transition kernel $P$. With $\widetilde V^{\pi}(s_H;f)=0$, we define recursively
\begin{align}
\widetilde Q^{\pi}(s,a;f)&=f(s,a)+\E_{s'\sim P(\cdot\mid s,a)}\brk*{\widetilde V^{\pi}(s';f)},\label{eq:tildeQdef}\\
\widetilde V^{\pi}(s;f)&=-\frac1\eta\log\E_{a\sim\pi(\cdot\mid s)}\brk*{\e^{-\eta\widetilde Q^{\pi}(s,a;f)}}.\label{eq:tildeVdef}
\end{align}
We use $\widetilde Q^{\widetilde P,\pi}$ and $\widetilde V^{\widetilde P,\pi}$ for the corresponding functions under a transition kernel $\widetilde P$, and we omit the superscript $\widetilde P$ when $\widetilde P=P$.
The regularized value function can also be expressed as
\begin{align}
\widetilde V^{\pi}(s;f)
&=\min_{p\in\Delta(\calA)}\biggl\{
\E_{a\sim p}\brk*{\widetilde Q^{\pi}(s,a;f)}+\frac1\eta\operatorname{KL}\prn*{p\parallel\pi(\cdot\mid s)}\biggr\},\label{eq:gibbs}
\end{align}
with minimizer $p(a)\propto\pi(a\mid s)\e^{-\eta\widetilde Q^{\pi}(s,a;f)}$.
Here, $\pi$ is the reference policy for the KL regularizer.
Our algorithm updates the policy at each state $s$ by choosing the minimizer in \cref{eq:gibbs}, namely,
\begin{align}\label{eq:expupdate}
\pi^+(a\mid s)\propto\pi(a\mid s)\e^{-\eta\widetilde Q^{\pi}(s,a;f)}.
\end{align}
\subsection{Relation to exploration bonuses}\label{sec:bonus}
We now show that the regularized $Q$-function implicitly incorporates an exploration bonus introduced by \citet{luo2021policy}. Taking $p=\pi(\cdot\mid s)$ in \cref{eq:gibbs} gives $\widetilde V^{\pi}(s;f)\leq\E_{a\sim\pi(\cdot\mid s)}\brk[\big]{\widetilde Q^{\pi}(s,a;f)}$, and we define the statewise gap $g(s)$ by
$g(s)=\E_{a\sim\pi(\cdot\mid s)}\brk*{\widetilde Q^{\pi}(s,a;f)}-\widetilde V^{\pi}(s;f)\geq0$,
and set $g(s,a)=g(s)$ for all $a\in\calA$. Then, the regularized value function is related to the ordinary value function by $\widetilde V^{\pi}(s;f)=V^{\pi}(s;f-g)$ for every $s\in\calS$. We show this identity by backward induction from $s_H$, where both sides are zero. For every $s\in\calS_h$,
\begin{align}
\widetilde V^{\pi}(s;f)
&=\E_{a\sim\pi(\cdot\mid s)}\brk*{\widetilde Q^{\pi}(s,a;f)}-g(s)\\
&=\E_{a\sim\pi(\cdot\mid s)}\Bigl[f(s,a)-g(s)+\E_{s'\sim P(\cdot\mid s,a)}\brk*{V^{\pi}(s';f-g)}\Bigr]\\
&=V^{\pi}(s;f-g),\label{eq:softvalue}
\end{align}
where the second equality follows from the induction hypothesis on layer $h+1$. Substituting \cref{eq:softvalue} into \cref{eq:tildeQdef} also gives $\widetilde Q^{\pi}(s,a;f)-g(s)=Q^{\pi}(s,a;f-g)$.

We now show that this identity yields an exploration bonus in the update \cref{eq:expupdate}. Since $g(s)$ does not depend on the action, it cancels in the normalization of \cref{eq:expupdate}, and hence
\begin{align}
\pi^+(a\mid s)
&\propto\pi(a\mid s)\e^{-\eta Q^{\pi}(s,a;f-g)}\\
&=\pi(a\mid s)\e^{-\eta\prn*{Q^{\pi}(s,a;f)-Q^{\pi}(s,a;g)}}.\label{eq:bonusupdate}
\end{align}
The right-hand side of \cref{eq:bonusupdate} is the update of \citet{luo2021policy} with the exploration bonus $Q^{\pi}(s,a;g)$. Hence the regularized $Q$-function incorporates the exploration bonus implicitly.

\subsection{Relation to global optimization}\label{sec:occupancy}
We now show that the local policy updates in \cref{eq:expupdate} together solve a global optimization problem over occupancy measures.
The algorithm itself remains a policy optimization method, which updates the policy locally at each state by \cref{eq:expupdate}. The interpretation below explains why the regret of these local updates can nevertheless be analyzed jointly over all state-action pairs, as in the occupancy-measure approach~\citep{zimin2013online,jin2020learning}.

Let $\Omega(P)$ be the set of all occupancy measures under $P$.
The Bregman divergence $D_\Psi$ induced by the negative conditional entropy $\Psi(q)=\sum_{s,a}q(s,a)\log(q(s,a)/q(s))$ satisfies
\begin{align}\label{eq:bregmankl}
D_\Psi(q^{\pi'},q^{\pi})=\sum_s q^{\pi'}(s)\operatorname{KL}\prn*{\pi'(\cdot\mid s)\parallel\pi(\cdot\mid s)}
\end{align}
for all policies $\pi'$ and $\pi$.
Since $\pi^+$ in \cref{eq:expupdate} is the minimizer of~\cref{eq:gibbs} at every state, applying \cref{eq:gibbs} backward from layer $H-1$ to layer $0$ gives
\begin{align}\label{eq:globalgibbs}
q^{\pi^+}\in\argmin_{q\in\Omega(P)}\set*{\inpr{q,f}+\frac1\eta D_\Psi(q,q^{\pi})},
\end{align}
with optimal value $\widetilde V^{\pi}(s_0;f)$.
Thus the local updates \cref{eq:expupdate} at all states together solve a KL-regularized problem over occupancy measures. This property lets the analysis in \cref{sec:known} treat all state-action pairs jointly, while the algorithm only solves an optimization over $\Delta(\calA)$ at each state.

\section{Main algorithm}\label{sec:algorithm}
\begin{algorithm*}[t]
\caption{Policy optimization with regularized $Q$-functions}\label{alg:po}
\DontPrintSemicolon
\KwRequire{Number of episodes $T$, confidence parameter $\delta\in(0,1)$, learning rate $\eta>0$, exploration rate $\gamma>0$, and initial policy $\pi_1(a\mid s)=1/A$ for all $(s,a)$.}
\For{$t=1,\ldots,T$}{
  Construct $\calP_t$ by \cref{eq:confrow}, or $\calP_t=\{P\}$ when $P$ is known.\nllabel{line:confset}\;
  Execute $\pi_t$ and observe the trajectory $\{(s_{t,h},a_{t,h},\ell_t(s_{t,h},a_{t,h}))\}_{h=0}^{H-1}$.\nllabel{line:execute}\;
  Construct the loss estimator $\displaystyle\hat\ell_t(s,a)=\frac{\I_t(s,a)\ell_t(s,a)}{\overline q_t^{\pi_t}(s,a)+\gamma}$ for all $(s,a)$.\nllabel{line:lossest}\;
  Set $\hat V_t(s_H)=0$.\nllabel{line:terminal}\;
  \For{$h=H-1,\ldots,0$\nllabel{line:backward}}{
    Compute $\displaystyle\hat Q_t(s,a)=\hat\ell_t(s,a)+\min_{p\in\calP_t(s,a)}\sum_{s'\in\calS_{h+1}}p(s')\hat V_t(s')$ for all $(s,a)\in\calS_h\times\calA$.\nllabel{line:qhat}\;
    Compute $\displaystyle\hat V_t(s)=-\frac1\eta\log\sum_{a\in\calA}\pi_t(a\mid s)\e^{-\eta\hat Q_t(s,a)}$ for all $s\in\calS_h$.\nllabel{line:vhat}\;
  }
  Update $\pi_{t+1}(a\mid s)=\pi_t(a\mid s)\e^{-\eta(\hat Q_t(s,a)-\hat V_t(s))}$ for all $(s,a)$.\nllabel{line:update}\;
}
\end{algorithm*}

This section presents the policy optimization algorithm summarized in \cref{alg:po}, using the regularized $Q$-function introduced in the last section. 
\subsection{Algorithm design}\label{sec:components}
In episode $t$, we execute $\pi_t$ and observe the trajectory $\{(s_{t,h},a_{t,h},\ell_t(s_{t,h},a_{t,h}))\}_{h=0}^{H-1}$ (\cref{line:execute}). To update the policy using these observations, existing policy optimization algorithms estimate the ordinary $Q$-function directly from bandit feedback and subtract an exploration bonus from the estimate~\citep{luo2021policy,dann2023best,li2026policy}. We instead first estimate the losses using the estimator of \citet{jin2020learning} (\cref{line:lossest}),
\begin{align}\label{eq:unknownest}
\hat\ell_t(s,a)=\frac{\I_t(s,a)\ell_t(s,a)}{\overline q_t^{\pi_t}(s,a)+\gamma},
\end{align}
where $\gamma>0$ is the exploration rate and $\overline q_t^{\pi_t}$ is the upper occupancy.
We then construct an optimistic $Q$-function estimate by choosing the transition kernel in $\calP_t$ that minimizes the regularized $Q$-function, with $\hat\ell_t$ as the input and $\pi_t$ as the reference policy (\crefrange{line:terminal}{line:vhat}),
\begin{align}\label{eq:optimistictransition}
\hat Q_t(s,a)
=\hat\ell_t(s,a)+\min_{\widetilde P\in\calP_t}
\E_{s'\sim\widetilde P(\cdot\mid s,a)}\brk*{\widetilde V^{\widetilde P,\pi_t}(s';\hat\ell_t)}.
\end{align}
Since $\calP_t$ is rectangular, there is a transition kernel in $\calP_t$ that attains the minima in \cref{eq:optimistictransition} for all $(s,a)$ simultaneously, and we write this kernel as $\underline P_t$, so that $\hat Q_t(s,a)=\widetilde Q^{\underline P_t,\pi_t}(s,a;\hat\ell_t)$ and $\hat V_t(s)=\widetilde V^{\underline P_t,\pi_t}(s;\hat\ell_t)$.
For $s\in\calS_h$, computing $\hat Q_t(s,a)$ only requires $\hat V_t$ on layer $h+1$, so $\hat Q_t$ and $\underline P_t$ are computed backward from $h=H-1$ to $h=0$ ({\crefrange{line:terminal}{line:vhat}}).
We then update the policy at every state $s$ by exponential weights with the losses $\hat Q_t(s,\cdot)$, which gives $\pi_{t+1}(a\mid s)\propto\pi_t(a\mid s)\e^{-\eta\hat Q_t(s,a)}$.
Since the value $\hat V_t(s)$ computed in \cref{line:vhat} satisfies $\sum_b\pi_t(b\mid s)\e^{-\eta\hat Q_t(s,b)}=\e^{-\eta\hat V_t(s)}$ by \cref{eq:tildeVdef}, this update is written as in \cref{line:update}.

\subsection{Regret upper bounds}\label{sec:guarantees}
\cref{alg:po} achieves the following regret upper bounds under known and unknown transitions.

\begin{theorem}\label{thm:known}
Under known transitions, \cref{alg:po} with $\gamma=\eta=\sqrt{\frac{H\log(SA/\delta)}{S(H+A)T}}$ achieves, with probability at least $1-3\delta$,
\begin{align}\label{eq:knownrate}
\operatorname{Reg}_T\lesssim\sqrt{HS(H+A)T\log\prn*{\frac{SA}{\delta}}} + H^2\log\prn*{\frac{SA}{\delta}}.
\end{align}
\end{theorem}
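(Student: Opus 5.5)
We view the update of \cref{alg:po} under known transitions as online mirror descent over occupancy measures with the conditional-entropy regularizer, as described in \cref{sec:occupancy}. With $\calP_t=\{P\}$ we have $\hat Q_t=\widetilde Q^{\pi_t}(\cdot;\hat\ell_t)$, and by \cref{eq:globalgibbs}
\[
q^{\pi_{t+1}}\in\argmin_{q\in\Omega(P)}\set*{\inpr{q,\hat\ell_t}+\tfrac1\eta D_\Psi(q,q^{\pi_t})},
\]
with optimal value $\widetilde V^{\pi_t}(s_0;\hat\ell_t)$. We split the regret as
\[
\operatorname{Reg}_T=\sum_t\inpr{q^{\pi_t}-q^{\pio},\ell_t}
=\sum_t\inpr{q^{\pi_t},\ell_t-\hat\ell_t}+\sum_t\inpr{q^{\pi_t}-q^{\pio},\hat\ell_t}+\sum_t\inpr{q^{\pio},\hat\ell_t-\ell_t}.
\]
The first term is the bias of the implicit-exploration estimator. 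It equals $\sum_t\sum_{s,a}q^{\pi_t}(s,a)\ell_t(s,a)\gamma/(q^{\pi_t}(s,a)+\gamma)$ plus a martingale term, so it is at most $\gamma SAT+\tilO(\sqrt{HT})$ by Freedman's inequality, since $\inpr{q^{\pi_t},\hat\ell_t}$ has conditional variance $O(H\cdot)$. The third term is at most $H\log(SA/\delta)/\gamma$ by the standard implicit-exploration concentration of \citet{neu2015explore}, applied to the fixed comparator with a union bound over $(s,a)$.

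For the middle term, we use the exact identity for exponential weights. For every $s$, since $\pi_{t+1}(a\mid s)=\pi_t(a\mid s)\e^{-\eta(\hat Q_t-\hat V_t)}$,
\[
\sum_s q^{\pio}(s)\bigl[\mathrm{KL}(\pio_s\|\pi_{t,s})-\mathrm{KL}(\pio_s\|\pi_{t+1,s})\bigr]=\eta\sum_s q^{\pio}(s)\E_{a\sim\pio}[\hat V_t(s)-\hat Q_t(s,a)].
\]
Applying the performance-difference lemma to the regularized values gives $\sum_s q^{\pio}(s)\E_{\pio}[\hat Q_t-\hat V_t]=\inpr{q^{\pio},\hat\ell_t}-\hat V_t(s_0)$, because $\hat Q_t(s,a)=\hat\ell_t+P\hat V_t$. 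Telescoping over $t$ and using $\mathrm{KL}(\pio_s\|\pi_{1,s})\le\log A$ yields
\[
\sum_t\bigl(\hat V_t(s_0)-\inpr{q^{\pio},\hat\ell_t}\bigr)\le\frac{H\log A}{\eta}.
\]
The middle term is therefore at most $H\log A/\eta+\sum_t\bigl(V^{\pi_t}(s_0;\hat\ell_t)-\widetilde V^{\pi_t}(s_0;\hat\ell_t)\bigr)$. This is the stability term: the difference between the ordinary and regularized values at the initial state.

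The main step is to bound this stability term. Unrolling the recursion \cref{eq:tildeVdef}, $\e^{-\eta\widetilde V^{\pi}(s_0;f)}=\E^{\pi}\bigl[\e^{-\eta\sum_h f(s_h,a_h)}\bigr]$ over trajectories, because the log-sum-exp recursion composes into the exponential moment of the cumulative loss. Write $L=\sum_h\hat\ell_t(s_h,a_h)$ for the estimated loss along an independent trajectory from $\pi_t$. Using $\e^{-x}\le1-x+x^2/2$ for $x\ge0$ and $-\log(1-y)\ge y$,
\[
V^{\pi_t}(s_0;\hat\ell_t)-\widetilde V^{\pi_t}(s_0;\hat\ell_t)\le\frac\eta2\E^{\pi_t}[L^2].
\]
We then bound $L^2\le\bigl(\sum_h\hat\ell_t(s_h,a_h)\bigr)^2$ by expanding into diagonal and cross terms, and take $\E_t$ over the observed trajectory. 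The diagonal terms give $\sum_{s,a}q^{\pi_t}(s,a)\E_t[\hat\ell_t(s,a)^2]\le\sum_{s,a}q^{\pi_t}(s,a)/q^{\pi_t}(s,a)\le SA$. For the cross terms, consider pairs $(s,a)$, $(s',a')$ in different layers. The visit to $(s,a)$ is weighted by $q^{\pi_t}(s,a)$ under the independent trajectory, which cancels the $1/q^{\pi_t}(s,a)$ of its estimator. The observed trajectory must reach both $(s,a)$ and $(s',a')$, so the estimated product is at most $q^{\pi_t}(s',a'\mid s,a)/q^{\pi_t}(s',a')$, weighted by $q^{\pi_t}(s',a'\mid s,a)$. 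Summing over $(s',a')$ in each later layer and using $\sum_{s',a'}q^{\pi_t}(s',a'\mid s,a)^2/q^{\pi_t}(s',a')\lesssim$ (states per layer)$/q^{\pi_t}(s,a)$ gives a total of order $HS$ rather than $HSA$. Hence $\E_t\E^{\pi_t}[L^2]\lesssim S(H+A)$. A Freedman-type concentration with $\gamma$ truncation then controls the deviation from the expectation in high probability, at cost $\tilO(H^2/\gamma)$-type lower-order terms.

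The sharp $S(H+A)$ count for the cross terms is where we expect the difficulty. Summing up,
\[
\operatorname{Reg}_T\lesssim\frac{H\log(SA/\delta)}{\eta}+\eta S(H+A)T+\gamma SAT+\text{lower order},
\]
and the choice of $\eta=\gamma$ in \cref{thm:known} gives the stated bound.
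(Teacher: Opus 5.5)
Your architecture matches the paper's: the same three-term decomposition, the same telescoping penalty bound $H\log A/\eta$, and the same reduction of stability to $\frac\eta2\E^{\pi_t}[L^2]$. The gap is in the step you flag as the crux, which does not go through as written. The inequality you invoke, $\sum_{s',a'}q^{\pi_t}(s',a'\mid s,a)^2/q^{\pi_t}(s',a')\lesssim(\text{states per layer})/q^{\pi_t}(s,a)$, holds for each fixed $(s,a)$. After multiplying by $q^{\pi_t}(s,a)$ and summing over all earlier pairs, you keep a factor equal to the number of earlier state-action pairs, so no $HS$ count follows. Even the sharp per-pair bound does not help. That bound is $1/q^{\pi_t}(s,a)$ for each later layer; it follows from $q^{\pi_t}(s')\ge q^{\pi_t}(s,a)q^{\pi_t}(s'\mid s,a)$ and is attained when $(s,a)$ is the only route to its successors. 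It yields only $\sum_{h<k}\abs{\calS_h}A\le HSA$, which is exactly the count the theorem must avoid.

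The saving appears only if you sum over the earlier layer first. For $h<k$ and $s'\in\calS_k$,
\[
\sum_{(s,a)\in\calS_h\times\calA}q^{\pi_t}(s,a)q^{\pi_t}(s'\mid s,a)^2\le\sum_{(s,a)\in\calS_h\times\calA}q^{\pi_t}(s,a)q^{\pi_t}(s'\mid s,a)=q^{\pi_t}(s').
\]
After summing over $a'$, each layer pair therefore contributes at most $\abs{\calS_k}$, the cross terms total at most $2HS$, and $\E_t\E^{\pi_t}[L^2]\le SA+2HS$. Your Freedman step also needs a pathwise range bound $\E^{\pi_t}[L^2]\lesssim H^2/\gamma$. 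This requires the cancellation $q^{\pi_t}(s,a)\hat\ell_t(s,a)\le\I_t(s,a)$ along the realized trajectory. The bound $\hat\ell_t\le1/\gamma$ alone gives only $H^2/\gamma^2$, which would cost about $H^2\log(1/\delta)/\gamma$ in the regret and exceed the main term by a factor $H$. With both pieces, Freedman gives $\sum_t\E^{\pi_t}[L^2]\lesssim S(H+A)T+H^2\log(1/\delta)/\gamma$, and the theorem follows.

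The paper avoids both the joint-visitation computation and the concentration of a quadratic functional. It bounds $\E^{\pi_t}[L^2]$ pathwise by the linear functional $\sum_{s,a}(1+2H\pi_t(a\mid s))\hat\ell_t(s,a)$. This uses $q^{\pi_t}(s,a)\hat\ell_t(s,a)\le\I_t(s,a)$, the fact that at most one pair per layer is visited, and $q^{\pi_t}(s',a'\mid s,a)\le\pi_t(a'\mid s')$. The weight $\pi_t(a\mid s)$ is what removes the factor $A$ from the cross terms. Because these weights are $\calF_t$-measurable, \cref{lem:jointIX} gives the high-probability bound directly.

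There are also two smaller corrections. First, the unrolled identity $\e^{-\eta\widetilde V^{\pi}(s_0;f)}=\E^{\pi}[\e^{-\eta\sum_h f(s_h,a_h)}]$ is false for stochastic transitions, because \cref{eq:tildeQdef} averages $\widetilde V$ over $s'$ outside the exponential. Jensen gives only ``$\le$'', which is fortunately the direction you need. Second, the conditional variance of $\inpr{q^{\pi_t},\hat\ell_t}$ can be of order $H^2$, since losses along one trajectory are correlated. The first bias term therefore costs $H\sqrt{T\log(1/\delta)}$ rather than $\sqrt{HT}$, which is still lower order.
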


\begin{theorem}\label{thm:unknown}
Under unknown transitions, \cref{alg:po} with the confidence sets \eqref{eq:confrow} and $\gamma=\eta=\sqrt{\frac{H\log(SAT/\delta)}{S(H+A)T}}$ achieves, with probability at least $1-9\delta$,
\begin{align}\label{eq:unknownrate}
\operatorname{Reg}_T\lesssim
HS\sqrt{AT\log\prn*{\frac{SAT}{\delta}}}+S^3A\log^2\prn*{\frac{SAT}{\delta}}.
\end{align}
\end{theorem}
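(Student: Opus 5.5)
We prove \cref{thm:unknown} by splitting the regret in the usual way of \citet{jin2020learning}, and then treating the estimated part jointly over all state-action pairs through the regularized value functions, as suggested by \cref{eq:globalgibbs}. On the event $P\in\calP_t$ for all $t$, which holds with probability at least $1-4\delta$, we write
\begin{align}
\operatorname{Reg}_T
=\sum_t\bigl(V^{\pi_t}(s_0;\ell_t)-\hat V_t(s_0)\bigr)+\sum_t\bigl(\hat V_t(s_0)-V^{\pio}(s_0;\hat\ell_t)\bigr)+\sum_t\bigl(V^{\pio}(s_0;\hat\ell_t)-V^{\pio}(s_0;\ell_t)\bigr).
\end{align}
The third sum is at most $\tilO(H/\gamma)$ with high probability, by the implicit-exploration concentration lemma of \citet{jin2020learning}, since $\overline q_t^{\pi_t}\ge q^{\pi_t}$.

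\textbf{Comparator term (middle sum).} Let $q^\star=q^{\pio}$ under the true $P$. Since $P\in\calP_t$ and $\hat V_t$ is built with the minimizing kernel $\underline P_t$, the variational form \cref{eq:gibbs} applied with $p=\pio(\cdot\mid s)$ at every state, followed by backward induction, gives
\begin{align}
\hat V_t(s_0)\le V^{\pio}(s_0;\hat\ell_t)+\frac1\eta\sum_s q^\star(s)\operatorname{KL}\bigl(\pio(\cdot\mid s)\parallel\pi_t(\cdot\mid s)\bigr)-\frac1\eta\sum_s q^\star(s)\operatorname{KL}\bigl(\pio(\cdot\mid s)\parallel\pi_{t+1}(\cdot\mid s)\bigr).
\end{align}
This is the exact three-point identity, because $\pi_{t+1}$ is the Gibbs minimizer (cf.\ \cref{eq:bregmankl}). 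The KL terms telescope over $t$, leaving $\frac1\eta\sum_s q^\star(s)\log A\le H\log A/\eta$. The middle sum is therefore at most $H\log A/\eta$.

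\textbf{Stability and bias (first sum).} We split it as
\begin{align}
\bigl[V^{\pi_t}(s_0;\ell_t)-V^{\underline P_t,\pi_t}(s_0;\hat\ell_t)\bigr]+\bigl[V^{\underline P_t,\pi_t}(s_0;\hat\ell_t)-\widetilde V^{\underline P_t,\pi_t}(s_0;\hat\ell_t)\bigr].
\end{align}
The second bracket is the stability term. Using $\log\E e^{-\eta X}\le-\eta\E X+\frac{\eta^2}{2}\E X^2$ for $X\ge0$ at each layer, and propagating through the recursion as in the identity $\widetilde V=V(\cdot;f-g)$ of \cref{sec:bonus}, we bound it by $\frac\eta2\E^{\underline P_t,\pi_t}\bigl[(\sum_h\hat\ell_t(s_h,a_h))^2\bigr]$. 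Expanding the square gives
\begin{align}
\eta\sum_{s,a}q^{\underline P_t,\pi_t}(s,a)\,\hat\ell_t(s,a)\sum_{s',a'}q^{\underline P_t,\pi_t}(s',a'\mid s,a)\,\hat\ell_t(s',a').
\end{align}
Its diagonal part is $\eta\sum q\,\hat\ell_t^2$. Since $q^{\underline P_t,\pi_t}\le\overline q_t^{\pi_t}$, this is at most $\eta\sum\hat\ell_t$, whose conditional expectation is at most $\eta SA$. For the off-diagonal part, $\hat\ell_t(s,a)\le \I_t(s,a)/(\overline q_t+\gamma)$ makes the weight $q(s,a)\hat\ell_t(s,a)\le\I_t(s,a)$, and the downstream sum is the estimated remaining loss. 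Taking conditional expectations then gives roughly $\eta\sum_{s'}\sum_{a'}\pi_t(a'\mid s')\cdot(\text{number of visited ancestors})$, which is $\lesssim\eta HS$. Summed over $t$, the stability term is therefore $\eta S(H+A)T$.

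The first bracket splits into $V^{\pi_t}(s_0;\ell_t)-V^{\pi_t}(s_0;\hat\ell_t)$ (a bias of size $\gamma\sum\overline q\cdot$, giving $\gamma SAT$) plus $V^{\pi_t}(s_0;\hat\ell_t)-V^{\underline P_t,\pi_t}(s_0;\hat\ell_t)$ (a transition-error term). The transition-error term is handled as in \citet{jin2020learning}: it is controlled by $\sum_t\sum_{s,a}|\overline q_t^{\pi_t}(s,a)-q^{\pi_t}(s,a)|\cdot\ell_t$, which is $\tilO(HS\sqrt{AT}+S^3A)$ after replacing $\hat\ell$ by $\ell$ via a Freedman argument.

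\textbf{Main obstacle and conclusion.} The main obstacle is the cross-term bound. It must be done under $\underline P_t$ rather than $P$, and with high probability rather than in expectation. We would first bound it under $\overline q$ and then use Freedman's inequality, with the variance controlled by the same quantity times $\eta H/\gamma$. Collecting terms gives $H\log A/\eta+\eta S(H+A)T+\gamma SAT+H/\gamma$ plus the transition term. With the stated $\eta=\gamma$, the first four terms are $\tilO(\sqrt{HS(H+A)T})\le\tilO(HS\sqrt{AT})$, and the transition term dominates, which yields \cref{eq:unknownrate}.
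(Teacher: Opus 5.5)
Your decomposition matches the paper's, as do the comparator term, the two bias terms, and the diagonal and cross-term bounds on $\E_{\underline P_t,\pi_t}[Z_t^2]$. (Your three-point identity for the Gibbs minimizer is the paper's substitution $\hat V_t(s)-\hat Q_t(s,a)=\eta^{-1}\log(\pi_{t+1}(a\mid s)/\pi_t(a\mid s))$, combined with $\underline P_t$ being the minimizer over $\calP_t\ni P$.)

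\textbf{The gap.} It lies in how you reach $V^{\underline P_t,\pi_t}(s_0;\hat\ell_t)-\hat V_t(s_0)\le\frac{\eta}{2}\E_{\underline P_t,\pi_t}[Z_t^2]$, which is exactly the step where the horizon improvement comes from. You apply $\log\E\e^{-\eta X}\le-\eta\E X+\frac{\eta^2}{2}\E X^2$ at each state and sum the gaps via $\widetilde V=V(\cdot;f-g)$. This only gives $\frac{\eta}{2}\sum_s q^{\underline P_t,\pi_t}(s)\E_{a\sim\pi_t(\cdot\mid s)}[\hat Q_t(s,a)^2]$. Since $\hat Q_t(s,a)$ already contains all downstream estimated losses, this charges each downstream loss once per ancestor layer. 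In general it is only bounded by $\frac{\eta}{2}\sum_h\E[(\sum_{k\ge h}\hat\ell_t(S_k,A_k))^2]\lesssim H\cdot\frac{\eta}{2}\E[Z_t^2]$. For example, take a deterministic chain with a nonnegative $f$ that is constant in $a$ and supported on the last layer: every $g(s)=0$, yet your per-state bounds sum to $H$ times $\frac{\eta}{2}\E[Z^2]$. This is precisely the repeated counting behind the extra $H$ in \citet{luo2021policy}. Carried through, it yields a stability bound of order $\eta HS(A+H)T=H^{3/2}\sqrt{S(H+A)T\log(SAT/\delta)}\ge H^2\sqrt{ST\log(SAT/\delta)}$ under the prescribed $\eta$. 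That is not $O(HS\sqrt{AT\log(SAT/\delta)})$ when $H^2\gg SA$, so the theorem as stated would not follow.

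\textbf{The fix.} Propagate the exponential moment through the layers and apply the quadratic bound only once. By Jensen in $s'$,
$\e^{-\eta\hat V_t(s)}=\E_{a\sim\pi_t(\cdot\mid s)}[\e^{-\eta\hat\ell_t(s,a)-\eta\E_{s'\sim\underline P_t(\cdot\mid s,a)}[\hat V_t(s')]}]\le\E_{a,s'}[\e^{-\eta\hat\ell_t(s,a)-\eta\hat V_t(s')}]$.
Iterating from $s_0$ with $\hat V_t(s_H)=0$ gives $\e^{-\eta\hat V_t(s_0)}\le\E_{\underline P_t,\pi_t}[\e^{-\eta Z_t}]$. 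Only then apply $\log x\le x-1$ and $\e^{-x}\le1-x+x^2/2$ to $Z_t\ge0$.

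\textbf{Two smaller points.}
\begin{itemize}
\item In your transition-error piece $\langle q^{\pi_t}-q^{\underline P_t,\pi_t},\hat\ell_t\rangle$, the weights are not $\calF_t$-measurable, because $\underline P_t$ depends on $\hat\ell_t$. Also, $|q^{\pi_t}-q^{\underline P_t,\pi_t}|$ is bounded by $\overline q_t^{\pi_t}-\underline q_t^{\pi_t}$, not by $|\overline q_t^{\pi_t}-q^{\pi_t}|$. Replace the weights by $\overline q_t^{\pi_t}-\underline q_t^{\pi_t}$ before concentrating, or route through $\langle\underline q_t^{\pi_t},\hat\ell_t\rangle$ as the paper does.
\item The conditional mean of $\langle q^{\pi_t},\ell_t-\hat\ell_t\rangle$ is at most $\sum_{s,a}(\overline q_t^{\pi_t}-q^{\pi_t}+\gamma)$, so it contains a width term and not only $\gamma SA$. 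Both of these are absorbed by the width lemma.
\end{itemize}
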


In both settings, these bounds improve the horizon dependence of the policy optimization bounds of \citet{luo2021policy}.
For known transitions, when $H\lesssim A$, the leading term becomes $\sqrt{HSAT\log(SA/\delta)}$, which matches the lower bound $\Omega(\sqrt{HSAT})$ of \citet{zimin2013online} up to logarithmic factors.
For unknown transitions, the leading term matches the best-known bound $\widetilde O(HS\sqrt{AT})$ of \citet{jin2020learning}, obtained by global optimization over occupancy measures.
Note that the additive term $S^3A\log^2(SAT/\delta)$ also arises in \citet{jin2020learning} with the same order when their lower-order terms are made explicit.

\subsection{Regret analysis}\label{sec:known}
We first decompose the regret $\operatorname{Reg}_T=\sum_{t=1}^T\inpr{q^{\pi_t}-q^{\pio},\ell_t}$ into the estimation errors of the losses and the regret on the estimated losses:
\begin{align}
\operatorname{Reg}_T
&=\underbrace{\sum_{t=1}^T\prn*{\inpr{q^{\pi_t},\ell_t}
-\inpr{\underline q_t^{\pi_t},\hat\ell_t}}}_{\textup{\textsc{Bias}}_1}+\underbrace{\sum_{t=1}^T\inpr{\underline q_t^{\pi_t}-q^{\pio},\hat\ell_t}}_{\textup{\textsc{Reg}}}
+\underbrace{\sum_{t=1}^T\inpr{q^{\pio},\hat\ell_t-\ell_t}}_{\textup{\textsc{Bias}}_2}.
\label{eq:unknowndecomp}
\end{align}
The term $\textup{\textsc{Bias}}_1$ is the estimation error of the learner, $\textup{\textsc{Reg}}$ is the regret on the estimated losses, and $\textup{\textsc{Bias}}_2$ is the estimation error of the comparator. 
This decomposition follows the occupancy-measure approach of \citet{jin2020learning}, treating all $(s,a)$ jointly rather than using the performance difference lemma to split the regret by state as in {\citet{shani2020optimistic}} and \citet{luo2021policy}.

In what follows, we will upper bound the three terms in \cref{eq:unknowndecomp}. 
The upper bounds on the two bias terms follow by the standard arguments of \citet[Lemmas~6 and~14]{jin2020learning} (see \cref{app:unknownregret} for proofs).

\begin{lemma}[Bound on $\textup{\textsc{Bias}}_1$]\label{lem:unknownbias1}
Suppose that $P\in\calP_t$ for all $t\in[T]$. Then, with probability at least $1-\delta$,
\begin{align}
\textup{\textsc{Bias}}_1
&\leq\sum_{t=1}^T\sum_{s,a}
\prn*{\overline q_t^{\pi_t}(s,a)-\underline q_t^{\pi_t}(s,a)}+\gamma SAT+H\sqrt{2T\log\prn*{\frac{1}{\delta}}}.
\end{align}
\end{lemma}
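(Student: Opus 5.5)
We split $\textsc{Bias}_1$ by inserting the conditional expectation of the estimator:
\begin{align}
\textsc{Bias}_1=\sum_{t=1}^T\inpr{q^{\pi_t},\ell_t}-\sum_{t=1}^T\E_t\brk*{\inpr{\underline q_t^{\pi_t},\hat\ell_t}}+\sum_{t=1}^T\prn*{\E_t\brk*{\inpr{\underline q_t^{\pi_t},\hat\ell_t}}-\inpr{\underline q_t^{\pi_t},\hat\ell_t}}.
\end{align}
The first difference is a deterministic bias term and the second is a martingale difference sum. Here $\pi_t$, $\calP_t$, $\overline q_t^{\pi_t}$ and $\underline q_t^{\pi_t}$ are all $\calF_t$-measurable, and $\E_t[\I_t(s,a)]=q^{\pi_t}(s,a)$. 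Hence $\E_t[\hat\ell_t(s,a)]=q^{\pi_t}(s,a)\ell_t(s,a)/(\overline q_t^{\pi_t}(s,a)+\gamma)$.

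For the bias part, fix $(s,a)$ and bound
\begin{align}
q^{\pi_t}(s,a)\ell_t(s,a)-\frac{\underline q_t^{\pi_t}(s,a)\,q^{\pi_t}(s,a)\ell_t(s,a)}{\overline q_t^{\pi_t}(s,a)+\gamma}
=q^{\pi_t}(s,a)\ell_t(s,a)\,\frac{\overline q_t^{\pi_t}(s,a)-\underline q_t^{\pi_t}(s,a)+\gamma}{\overline q_t^{\pi_t}(s,a)+\gamma}.
\end{align}
Since $P\in\calP_t$, we have $q^{\pi_t}(s,a)\le\overline q_t^{\pi_t}(s,a)$, and also $\ell_t\le1$. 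The prefactor $q^{\pi_t}(s,a)\ell_t(s,a)/(\overline q_t^{\pi_t}(s,a)+\gamma)$ is therefore at most $1$. Each summand is thus at most $\overline q_t^{\pi_t}(s,a)-\underline q_t^{\pi_t}(s,a)+\gamma$. Summing over $(s,a)$ and $t$ gives the first two terms of the lemma, with $\gamma SAT$ coming from the $\gamma$ contributions.

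For the martingale part, let $X_t=\E_t[\inpr{\underline q_t^{\pi_t},\hat\ell_t}]-\inpr{\underline q_t^{\pi_t},\hat\ell_t}$, which is a martingale difference sequence with respect to $\calF_t$. We need $0\le\inpr{\underline q_t^{\pi_t},\hat\ell_t}\le H$ almost surely. This holds because $\underline q_t^{\pi_t}(s,a)/(\overline q_t^{\pi_t}(s,a)+\gamma)\le1$, so $\inpr{\underline q_t^{\pi_t},\hat\ell_t}\le\sum_{s,a}\I_t(s,a)\le H$, as at most one pair per layer is visited. Hence $X_t$ lies in an interval of length $H$. Azuma--Hoeffding then gives $\sum_tX_t\le H\sqrt{2T\log(1/\delta)}$ with probability at least $1-\delta$; the constant is comfortably within the stated bound even with the crude range $[-H,H]$.

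The only delicate point is the measurability and ordering. The upper and lower occupancies must use $\calP_t$, which is built from data before episode $t$, so that the conditional expectation factorizes. The assumption $P\in\calP_t$ is used only for the inequality $q^{\pi_t}\le\overline q_t^{\pi_t}$. Combining the two parts proves the lemma.
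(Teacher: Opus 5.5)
Your proof is correct and follows the paper's argument step for step. It uses the same split of $\textup{\textsc{Bias}}_1$ through $\E_t[\hat\ell_t]$, the same per-pair bound $\overline q_t^{\pi_t}(s,a)-\underline q_t^{\pi_t}(s,a)+\gamma$ from $q^{\pi_t}\le\overline q_t^{\pi_t}$, and Azuma--Hoeffding with range $H$ obtained from $0\le\inpr{\underline q_t^{\pi_t},\hat\ell_t}\le\sum_{s,a}\I_t(s,a)=H$.
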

The occupancy width $\sum_{t=1}^T\sum_{s,a}
\prn[\big]{\overline q_t^{\pi_t}(s,a)-\underline q_t^{\pi_t}(s,a)}$ appearing in the bound on $\textup{\textsc{Bias}}_1$ satisfies the following bound.
\begin{lemma}[{Occupancy width, adapted from \citealt[Lemma~E.5]{lancewicki2022cooperative}, see also \citealt[Lemma~4]{jin2020learning}}]\label{lem:widthappendix}
Suppose that $P\in\calP_t$ for all $t\in[T]$. Then, with probability at least $1-2\delta$,
\begin{align}\label{eq:widthstated}
&\sum_{t=1}^T\sum_{s,a}
\prn*{\overline q_t^{\pi_t}(s,a)-\underline q_t^{\pi_t}(s,a)}
\lesssim HS\sqrt{AT\log\prn*{\frac{SAT}{\delta}}}
+S^3A\log^2\prn*{\frac{SAT}{\delta}}.
\end{align}
\end{lemma}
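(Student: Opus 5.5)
We follow the argument of \citet[Lemma~4]{jin2020learning}, in the refined form of \citet[Lemma~E.5]{lancewicki2022cooperative}. The first step is to bound the width at a single state by the deviation of transition kernels along a trajectory. Fix $t$ and $s$, and let $\widetilde P\in\calP_t$ be a kernel attaining $\overline q_t^{\pi_t}(s)$ or $\underline q_t^{\pi_t}(s)$. We apply the standard occupancy-difference identity
\begin{align}
\abs*{q^{\widetilde P,\pi_t}(s)-q^{\pi_t}(s)}\leq\sum_{(x,a,x')}q^{\pi_t}(x,a)\abs*{\widetilde P(x'\mid x,a)-P(x'\mid x,a)}\,q^{\widetilde P,\pi_t}(s\mid x')
\end{align}
(sum over $h(x)<h(s)$). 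This reduces the width to bounding $\abs{\widetilde P-P}$ on the confidence set.

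On the event $P\in\calP_t$, both kernels lie in $\calP_t$, so the triangle inequality and the Bernstein form of \eqref{eq:confrow} give
\begin{align}
\abs{\widetilde P(x'\mid x,a)-P(x'\mid x,a)}\lesssim\sqrt{\frac{P(x'\mid x,a)\iota}{\max\{1,n_t(x,a)\}}}+\frac{\iota}{\max\{1,n_t(x,a)\}},\qquad\iota=\log(SAT/\delta).
\end{align}
Here we first replace $\hat P_t$ by $P$ inside the square root, which costs only lower-order terms. We substitute this bound into the identity and write $q^{\widetilde P,\pi_t}(s\mid x')=q^{\pi_t}(s\mid x')+(\text{error})$. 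Iterating the identity once more on the error shows that the error contributes a product of two deviation terms. Summing over $s,a$, the main term becomes
\begin{align}
\sum_t\sum_{x,a,x'}q^{\pi_t}(x,a)\sqrt{\frac{P(x'\mid x,a)\iota}{n_t(x,a)}}\cdot\sum_s q^{\pi_t}(s\mid x')\leq H\sum_t\sum_{x,a}q^{\pi_t}(x,a)\sqrt{\frac{S_{h+1}\iota}{n_t(x,a)}},
\end{align}
where the inequality uses Cauchy--Schwarz over $x'$. The second-order cross term is bounded by roughly $\sum_t\sum_{x,a}q^{\pi_t}(x,a)\,S^2\iota/n_t(x,a)$ times an $S$ factor, again by Cauchy--Schwarz over the intermediate pairs.

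The second step converts expectations to counts. A Freedman-type concentration gives, with probability $1-2\delta$, $\sum_t q^{\pi_t}(x,a)/\sqrt{\max\{1,n_t\}}\lesssim\sum_t\I_t(x,a)/\sqrt{\max\{1,n_t\}}+\iota$, and similarly for $1/n_t$. Pigeonhole then yields $\sum_{x,a}\sqrt{n_{T}(x,a)}\le\sqrt{SA\cdot HT}$ after grouping by layer, and $\sum 1/n\lesssim SA\log T$. Summing $H\sqrt{S_{h+1}}\cdot\sqrt{S_hA\,T}$ over layers and applying Cauchy--Schwarz over $h$ gives $\sqrt{\sum_h S_{h+1}}\sqrt{\sum_h S_h}\le S$. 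The leading term is therefore $HS\sqrt{AT\iota}$, and the second-order terms give $S^3A\iota^2$.

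The main obstacle is the layer bookkeeping. A crude bound $S$ per layer would give $H^{3/2}S\sqrt{AT}$ or $HS^{3/2}$. Obtaining exactly $HS\sqrt{AT}$ requires splitting $S=\sum_h S_h$ per layer and keeping the $\sqrt{P(x'\mid x,a)}$ factor in the Bernstein bound. The factor $H$ then comes only from $\sum_s q^{\pi_t}(s\mid x')\le H$. The lower-order term must also be controlled with only polynomial $S^3A$ rather than an $H$-dependent blow-up, which is where the second-order expansion of \citet{lancewicki2022cooperative} is needed.
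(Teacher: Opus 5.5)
Your route is the one the paper relies on. The paper does not prove this lemma itself; it imports \citet[Lemma~E.5]{lancewicki2022cooperative} and \citet[Lemma~4]{jin2020learning}. Your first-order bookkeeping reproduces their leading term correctly: $\sum_s q^{\pi_t}(s\mid x')\le H$ under the true kernel, $\sum_{x'}\sqrt{P(x'\mid x,a)}\le\sqrt{S_{h+1}}$, per-layer pigeonhole $\sqrt{S_hAT}$, and $\sum_h\sqrt{S_hS_{h+1}}\le S$. The $\iota/n$ part of the first-order term contributes only $HS^2A\iota^2\le S^3A\iota^2$.

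The gap is the second-order term, which sets the stated additive $S^3A\log^2(SAT/\delta)$ and which you only assert. Taken literally, your sketch gives a weight $S^2\iota/n_t(x,a)$ times an extra $S$. Combined with $\sum_t\sum_{x,a}q^{\pi_t}(x,a)/\max\{1,n_t(x,a)\}\lesssim SA\log T$, this yields $S^4A\iota^2$, not $S^3A\iota^2$.

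A routine execution does not repair this. The cross term is a double sum over layers $m<k$, with the outer deviation at $(x,a)\in\calS_m\times\calA$ and the inner one at $(y,b)\in\calS_k\times\calA$. The factor $\sum_s q^{\widetilde P^s,\pi_t}(s\mid y')$ is only bounded by $S$, because the extremal kernel depends on $s$ (you note this correctly). After the usual AM--GM split of $\sqrt{P(x'\mid x,a)\iota/n(x,a)}\sqrt{P(y'\mid y,b)\iota/n(y,b)}$, each half still carries a free sum over the other layer index. That costs an extra factor up to $H$, giving $HS^3A\iota^2$ in general.

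One way to actually reach $S^3A\iota^2$ has two parts.
\begin{enumerate}
\item For the $\iota/n(x,a)$ part of the outer deviation, bound $q^{\widetilde P^s,\pi_t}(s\mid x')\le 1$ directly, without expanding. This gives $S\sum_m S_mS_{m+1}A\iota^2\le S^3A\iota^2$.
\item For the $\sqrt{P(x'\mid x,a)\iota/n(x,a)}$ part, bound the inner deviation summed over $y'$ and over $s$ by $S\min\{2,\sqrt{S_{k+1}\iota/n(y,b)}+S_{k+1}\iota/n(y,b)\}\lesssim S\sqrt{S_{k+1}\iota/n(y,b)}$, since $\ell_1$ deviations of distributions are at most $2$. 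Then apply Cauchy--Schwarz jointly over $t$ and $(x,a,x',y,b)$, using $\sum_{x,a,x'}q^{\pi_t}(x,a)P(x'\mid x,a)q^{\pi_t}(y,b\mid x')=q^{\pi_t}(y,b)$. Each layer pair $(m,k)$ then contributes $\lesssim SA\iota^2\sqrt{S_mS_{m+1}}\sqrt{S_kS_{k+1}}$, and the double sum is at most $S^3A\iota^2$ because $\sum_m\sqrt{S_mS_{m+1}}\le S$.
\end{enumerate}
Without an argument of this kind, your proof establishes the leading term $HS\sqrt{AT\iota}$ but only a weaker additive term than the one stated.
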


\begin{lemma}[Bound on $\textup{\textsc{Bias}}_2$]\label{lem:unknownbias2}
Suppose that $P\in\calP_t$ for all $t\in[T]$. Then, with probability at least $1-\delta$,
\begin{align}
\textup{\textsc{Bias}}_2\leq\frac{H}{2\gamma}\log\prn*{\frac{SA}{\delta}}.
\end{align}
\end{lemma}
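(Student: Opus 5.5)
This bound follows the implicit-exploration argument of \citet[Lemma~14]{jin2020learning} (see also Neu's IX estimator): bound the sum of $\hat\ell_t$ against a fixed nonnegative weighting by exponentiating and showing the result is a supermartingale. Fix the comparator weights $\alpha(s,a)=q^{\pio}(s,a)\in[0,1]$. Although $\pio$ depends on the whole loss sequence, it is a fixed deterministic policy once the adversary's losses are fixed. To avoid any measurability issue, we instead prove a bound holding simultaneously for every $(s,a)$ and then sum with weights $q^{\pio}(s,a)$. Since $\sum_{s,a}q^{\pio}(s,a)=H$ (one visit per layer), this produces the factor $H$.

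\emph{Step 1 (per-pair concentration).} Fix $(s,a)$ and set $\beta=2\gamma$. We claim that, with probability at least $1-\delta/(SA)$,
\[
\sum_{t=1}^T\prn*{\hat\ell_t(s,a)-\ell_t(s,a)}\le\frac{1}{2\gamma}\log\prn*{\frac{SA}{\delta}}.
\]
The key pointwise inequality is $\frac{z}{1+z/2}\le\log(1+z)$ for $z\ge0$. Writing $\overline q=\overline q_t^{\pi_t}(s,a)$ and $\I=\I_t(s,a)$, we get
\[
2\gamma\hat\ell_t(s,a)=\frac{2\gamma\I\ell_t}{\overline q+\gamma}\le\frac{2\gamma\I\ell_t}{\overline q+\gamma\I\ell_t}=\frac{2\gamma\I\ell_t/\overline q}{1+\gamma\I\ell_t/\overline q}\le\log\prn*{1+\frac{2\gamma\I\ell_t}{\overline q}}.
\]
Taking conditional expectations and using $\E_t[\I]=q^{\pi_t}(s,a)\le\overline q$ (which holds because $P\in\calP_t$),
\[
\E_t\brk*{\e^{2\gamma\hat\ell_t(s,a)}}\le1+2\gamma\ell_t(s,a)\le\e^{2\gamma\ell_t(s,a)}.
\]
Hence $\exp\bigl(2\gamma\sum_{\tau\le t}(\hat\ell_\tau-\ell_\tau)\bigr)$ is a supermartingale with respect to $\calF_t$; this uses that $\ell_t$ and $\overline q_t^{\pi_t}$ are $\calF_t$-measurable. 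Markov's inequality then gives the claim.

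\emph{Step 2 (union bound and weighting).} A union bound over the $SA$ pairs makes all per-pair bounds hold with probability at least $1-\delta$. Because the per-pair bounds are uniform, they hold in particular for whatever $\pio$ the adversary induces. Summing with weights $q^{\pio}(s,a)\ge0$ gives
\[
\textup{\textsc{Bias}}_2\le\sum_{s,a}q^{\pio}(s,a)\cdot\frac{1}{2\gamma}\log\prn*{\frac{SA}{\delta}}=\frac{H}{2\gamma}\log\prn*{\frac{SA}{\delta}}.
\]

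\emph{Main obstacle.} The delicate point is the chain of inequalities in Step 1. We must replace $\gamma$ by $\gamma\I\ell_t$ in the denominator, which requires $\I\ell_t\in[0,1]$, and we need the factor $2\gamma$ rather than $\gamma$ so that the $\log(1+z)$ inequality applies with the right constant. The event $P\in\calP_t$ is used only for $q^{\pi_t}\le\overline q_t^{\pi_t}$. Since this event may not be $\calF_t$-measurable in a convenient way, I would run the supermartingale argument on the stopped process, multiplying each increment by $\ind\{P\in\calP_t\}$, which is $\calF_t$-measurable because $\calP_t$ is built from past data. The bound then transfers to the assumed event.
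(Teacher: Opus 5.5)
Your proposal is correct and follows the paper's proof. The paper likewise obtains $\sum_t(\hat\ell_t(s,a)-\ell_t(s,a))\le\frac{1}{2\gamma}\log(SA/\delta)$ simultaneously for all $(s,a)$ by citing the implicit-exploration argument from the proof of Lemma~14 of \citet{jin2020learning}, which you re-derive explicitly, and then sums with the nonnegative weights $q^{\pio}(s,a)$, whose total is $H$; your use of the $\calF_t$-measurable indicator of $P\in\calP_t$ makes rigorous a conditioning step that the paper leaves implicit.
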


We now bound the remaining term, $\textup{\textsc{Reg}}$.

\begin{lemma}[Bound on $\textup{\textsc{Reg}}$]\label{lem:unknownreg}
Suppose that $P\in\calP_t$ for all $t\in[T]$. Then, with probability at least $1-\delta$,
\begin{align}
\textup{\textsc{Reg}}
&\leq\frac{H\log A}{\eta}
+\frac{\eta ST(A+2H)}{2}+\frac{3\eta H^2}{4\gamma}\log\prn*{\frac{H}{\delta}}.
\end{align}
\end{lemma}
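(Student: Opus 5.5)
The plan is to run a mirror-descent argument state by state, using the exact multiplicative form of \cref{line:update}, then aggregate with the comparator's occupancy and telescope across layers. This reduces \textsc{Reg} to a penalty term plus a global stability term $V-\widetilde V$ at $s_0$.

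\textbf{Step 1: per-state identity.} Fix $t$ and $s$. Since $\log(\pi_{t+1}(a\mid s)/\pi_t(a\mid s))=-\eta(\hat Q_t(s,a)-\hat V_t(s))$, we have exactly
\begin{align}
\E_{a\sim\pio(\cdot\mid s)}\brk*{\hat Q_t(s,a)}-\hat V_t(s)=\frac1\eta\prn*{\operatorname{KL}(\pio(\cdot\mid s)\parallel\pi_t(\cdot\mid s))-\operatorname{KL}(\pio(\cdot\mid s)\parallel\pi_{t+1}(\cdot\mid s))}.
\end{align}
No inequality is lost at this step.

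\textbf{Step 2: aggregate over states.} Multiply the identity by $q^{\pio}(s)$ and sum over $s$. Because $P\in\calP_t$ and $\hat Q_t$ uses the minimizing kernel, $\hat Q_t(s,a)\le\hat\ell_t(s,a)+\E_{s'\sim P(\cdot\mid s,a)}[\hat V_t(s')]$. Summing with the flow equations of $q^{\pio}$ telescopes the $\hat V_t$ terms across layers, giving
\begin{align}
\sum_s q^{\pio}(s)\prn*{\E_{a\sim\pio}\hat Q_t(s,a)-\hat V_t(s)}\le\inpr{q^{\pio},\hat\ell_t}-\hat V_t(s_0).
\end{align}
Combine this with Step 1 and sum over $t$. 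The potential $\Phi_t=\sum_s q^{\pio}(s)\operatorname{KL}(\pio(\cdot\mid s)\parallel\pi_t(\cdot\mid s))$ satisfies $\Phi_1\le H\log A$ because $\pi_1$ is uniform and $\sum_s q^{\pio}(s)=H$. This yields
\begin{align}
\textup{\textsc{Reg}}\le\frac{H\log A}{\eta}+\sum_{t=1}^T\prn*{\inpr{\underline q_t^{\pi_t},\hat\ell_t}-\hat V_t(s_0)}.
\end{align}

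\textbf{Step 3: reduce stability to a second moment.} Since $\underline q_t^{\pi_t}(s)\le q^{\underline P_t,\pi_t}(s)$ and $\hat\ell_t\ge0$, we get $\inpr{\underline q_t^{\pi_t},\hat\ell_t}\le V^{\underline P_t,\pi_t}(s_0;\hat\ell_t)$. Recall that $\hat V_t(s_0)=\widetilde V^{\underline P_t,\pi_t}(s_0;\hat\ell_t)$.

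Let $L_t$ denote the cumulative estimated loss $\sum_h\hat\ell_t(s_h,a_h)$ along a fictitious trajectory drawn from $(\underline P_t,\pi_t)$. Applying Jensen to the expectation over $s'$ inside \cref{eq:tildeVdef}, backward over layers, gives $\e^{-\eta\hat V_t(s_0)}\le\E[\e^{-\eta L_t}]$. Using $\e^{-x}\le1-x+x^2/2$ for $x\ge0$ together with $\log(1+y)\le y$, we obtain
\begin{align}
V^{\underline P_t,\pi_t}(s_0;\hat\ell_t)-\hat V_t(s_0)\le\E[L_t]+\frac1\eta\log\E[\e^{-\eta L_t}]\le\frac\eta2\E[L_t^2].
\end{align}

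\textbf{Step 4: bound $\sum_t\E[L_t^2]$.} Expand $L_t^2$ into diagonal and cross terms. The diagonal part is $\sum_{s,a}q^{\underline P_t,\pi_t}(s,a)\hat\ell_t(s,a)^2$. Since $\hat\ell_t(s,a)\le\I_t(s,a)/(\overline q_t^{\pi_t}(s,a)+\gamma)$ and $q^{\underline P_t,\pi_t}\le\overline q_t^{\pi_t}$, this is at most $\sum_{s,a}\I_t(s,a)/(\overline q_t^{\pi_t}(s,a)+\gamma)$. Its conditional expectation is at most $SA$, because $q^{\pi_t}\le\overline q_t^{\pi_t}$; this gives the $\eta SAT/2$ term.

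The cross part is $2\sum_{(s,a)}q^{\underline P_t,\pi_t}(s,a)\hat\ell_t(s,a)\,\E[\text{downstream }L\mid s,a]$. I would bound the downstream conditional value by a sum over later layers of conditional occupancies times $\hat\ell_t$. Its conditional expectation is then at most $2H$ per layer, contributing about $\eta STH$; this gives the $\eta ST\cdot 2H/2$ term.

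\textbf{Main obstacle: high-probability control.} Step 4 bounds only conditional expectations, whereas \textsc{Reg} involves the realized products of estimated losses. I would apply the implicit-exploration concentration used for \cref{lem:unknownbias2} (Jin et al., Lemma~14): for $\calF_t$-measurable weights $\alpha_t(s,a)\in[0,2\gamma]$, with probability at least $1-\delta$,
\begin{align}
\sum_t\sum_{s,a}\alpha_t(s,a)\prn*{\hat\ell_t(s,a)-\frac{q^{\pi_t}(s,a)}{\overline q_t^{\pi_t}(s,a)+\gamma}\ell_t(s,a)}\le\log(1/\delta).
\end{align}
I would apply it once per layer $h$, with a union bound over the $H$ layers, after rescaling the cross-term weights so they lie in $[0,2\gamma]$. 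The rescaling factor is of order $H$, since downstream estimated values are at most $H/\gamma$. This costs $\frac{\eta}{2}\cdot\frac{H}{2\gamma}\cdot H\log(H/\delta)$ and related pieces, producing the final term $\frac{3\eta H^2}{4\gamma}\log(H/\delta)$.

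The delicate point is that the weights must be $\calF_t$-measurable while the downstream factor itself contains $\hat\ell_t$. I would try first to bound the downstream factor by its $\calF_t$-conditional counterpart layer by layer, processing layers backward, so that each application of the concentration lemma sees only predictable weights.
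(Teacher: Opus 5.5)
Your Steps 1--3 follow the paper's route. The penalty is handled by the multiplicative update and the flow equations of $q^{\pio}$, and the stability term is reduced via Jensen to $\frac\eta2\E_{\underline P_t,\pi_t}[L_t^2]$. Step 1, however, has a sign slip. \cref{line:update} gives
\begin{align}
\E_{a\sim\pio(\cdot\mid s)}\brk*{\hat Q_t(s,a)}-\hat V_t(s)=\frac1\eta\Bigl(\operatorname{KL}\prn*{\pio(\cdot\mid s)\parallel\pi_{t+1}(\cdot\mid s)}-\operatorname{KL}\prn*{\pio(\cdot\mid s)\parallel\pi_t(\cdot\mid s)}\Bigr).
\end{align}
With your sign, Step 2 would yield $\hat V_t(s_0)-\inpr{q^{\pio},\hat\ell_t}\le(\Phi_{t+1}-\Phi_t)/\eta$. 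This telescopes to the uncontrolled $(\Phi_{T+1}-\Phi_1)/\eta$. With the correct sign you get $(\Phi_t-\Phi_{t+1})/\eta$, and hence $H\log A/\eta$.

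The genuine gap is Step 4, which you flag but do not close. Bounds on conditional expectations of the cross terms do not give the high-probability statement, and the remedy you sketch does not work. The realized downstream factor $\sum_{(s',a')}q^{\underline P_t,\pi_t}(s',a'\mid s,a)\hat\ell_t(s',a')$ is not pointwise bounded by its $\calF_t$-conditional expectation. It also cannot serve as an implicit-exploration weight on $\hat\ell_t(s,a)$, for two reasons. First, it is not $\calF_t$-measurable. Second, it can be as large as $H/\gamma$, so fitting it into $[0,2\gamma]$ requires scaling down by order $H/\gamma^2$, not $H$. That produces an $\eta H^2/\gamma^2$ term, which destroys the rate.

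The missing idea is to linearize $\E_{\underline P_t,\pi_t}[L_t^2]$ deterministically. The inequality $q^{\underline P_t,\pi_t}\le\overline q_t^{\pi_t}$ that you used for the diagonal also gives $q^{\underline P_t,\pi_t}(s,a)\hat\ell_t(s,a)\le\I_t(s,a)$ for the \emph{upstream} factor of every cross term. Combine this with $\sum_{(s,a)\in\calS_h\times\calA}\I_t(s,a)\le1$ and $q^{\underline P_t,\pi_t}(s',a'\mid s,a)\le\pi_t(a'\mid s')$ to get
\begin{align}
\E_{\underline P_t,\pi_t}\brk*{L_t^2}\le\sum_{s,a}\bigl(1+2H\pi_t(a\mid s)\bigr)\hat\ell_t(s,a).
\end{align}
This is linear in $\hat\ell_t$ with $\calF_t$-measurable weights. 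Write $\hat\ell_t=\ell_t+(\hat\ell_t-\ell_t)$. The first part is at most $S(A+2H)$ per episode. For the second part, apply \cref{lem:jointIX} once with $\alpha_t(s,a)=2\gamma(1+2H\pi_t(a\mid s))/(2H+1)\in[0,2\gamma]$, together with $q^{\pi_t}\le\overline q_t^{\pi_t}$. This bounds it by $\frac{(2H+1)H}{2\gamma}\log(H/\delta)$. Multiplying by $\eta/2$ and using $2H^2+H\le3H^2$ gives the last two terms of the lemma, with no layer-by-layer conditioning.

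Note also that the implicit-exploration lemma compares $\hat\ell_t$ with $\frac{q^{\pi_t}}{\overline q_t^{\pi_t}}\ell_t$, not with $\E_t[\hat\ell_t]=\frac{q^{\pi_t}}{\overline q_t^{\pi_t}+\gamma}\ell_t$. The version you quote is stronger and false in general. It would bound, uniformly in $T$, a mean-zero martingale whose increments have non-vanishing variance by $\log(1/\delta)$.
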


\begin{proof}[Proof of \cref{lem:unknownreg}]
Since $\hat\ell_t\geq0$ and $\underline q_t^{\pi_t}\leq q^{\underline P_t,\pi_t}$, we have
$\inpr{\underline q_t^{\pi_t},\hat\ell_t}\leq\inpr{q^{\underline P_t,\pi_t},\hat\ell_t}=V^{\underline P_t,\pi_t}(s_0;\hat\ell_t)$.
Hence
\begin{align}
\textup{\textsc{Reg}}
&\leq\underbrace{\sum_{t=1}^T\prn*{\hat V_t(s_0)
-\inpr{q^{\pio},\hat\ell_t}}}_{\textup{\textsc{Penalty}}}
+\underbrace{\sum_{t=1}^T\prn*{V^{\underline P_t,\pi_t}
(s_0;\hat\ell_t)-\hat V_t(s_0)}}_{\textup{\textsc{Stability}}}.
\label{eq:penaltystability}
\end{align}
We first bound $\textup{\textsc{Penalty}}$. Substituting $\hat\ell_t(s,a)=\hat Q_t(s,a)-\sum_{s'}\underline P_t(s'\mid s,a)\hat V_t(s')$ gives
\begin{align}
&\hat V_t(s_0)-\inpr{q^{\pio},\hat\ell_t}\\
&=\hat V_t(s_0)-\sum_{s,a}q^{\pio}(s,a)\hat Q_t(s,a)+\sum_{s,a,s'}q^{\pio}(s,a)\underline P_t(s'\mid s,a)\hat V_t(s')\\
&=\hat V_t(s_0)-\sum_{s,a}q^{\pio}(s,a)\hat Q_t(s,a)+\sum_{s'\neq s_0}q^{\pio}(s')\hat V_t(s')-\sum_{s,a,s'}q^{\pio}(s,a)
\prn*{P(s'\mid s,a)-\underline P_t(s'\mid s,a)}\hat V_t(s')\\
&\leq\sum_s q^{\pio}(s)\prn*{\hat V_t(s)
-\sum_a\pio(a\mid s)\hat Q_t(s,a)},\label{eq:penaltyflow}
\end{align}
where the inequality follows because $\underline P_t$ attains the minimum in \cref{eq:optimistictransition} and $P\in\calP_t$.
Substituting $\hat V_t(s)-\hat Q_t(s,a)=\eta^{-1}\log(\pi_{t+1}(a\mid s)/\pi_t(a\mid s))$ from \cref{line:update} into \cref{eq:penaltyflow} and summing over $t$ give
\begin{align}
\textup{\textsc{Penalty}}
&=\sum_{t=1}^T\prn*{\hat V_t(s_0)-\inpr{q^{\pio},\hat\ell_t}}\\*
&\leq\frac1\eta\sum_{t=1}^T\sum_{s,a}q^{\pio}(s)\pio(a\mid s)
\log\prn*{\frac{\pi_{t+1}(a\mid s)}{\pi_t(a\mid s)}}\\*
&=\frac1\eta\sum_{t=1}^T\bigl(D_\Psi(q^{\pio},q^{\pi_t})-D_\Psi(q^{\pio},q^{\pi_{t+1}})\bigr)\\*
&\leq\frac1\eta D_\Psi(q^{\pio},q^{\pi_1})\leq\frac{H\log A}{\eta},\label{eq:penaltybound}
\end{align}
where the second equality follows from \cref{eq:bregmankl}, and the last inequality from the uniformity of $\pi_1$.

For $\textup{\textsc{Stability}}$, let $(S_h,A_h)_{h=0}^{H-1}$ be an auxiliary trajectory under $(\underline P_t,\pi_t)$ and let $Z_t=\sum_{h=0}^{H-1}\hat\ell_t(S_h,A_h)$ be its random cumulative estimated loss. Then $V^{\underline P_t,\pi_t}(s_0;\hat\ell_t)=\E_{\underline P_t,\pi_t}[Z_t]$. We first show $\hat V_t(s_0)\geq-\eta^{-1}\log\E_{\underline P_t,\pi_t}[\e^{-\eta Z_t}]$. The definitions of $\hat Q_t,\hat V_t$ and Jensen's inequality give, for every state $s$,
\begin{align}
\e^{-\eta\hat V_t(s)}
&\leq\E_{\substack{a\sim\pi_t(\cdot\mid s),s'\sim\underline P_t(\cdot\mid s,a)}}
\brk*{\e^{-\eta\hat\ell_t(s,a)-\eta\hat V_t(s')}}.
\end{align}
Repeatedly applying this inequality and using $\hat V_t(s_H)=0$ gives $\e^{-\eta\hat V_t(s_0)}\leq\E_{\underline P_t,\pi_t}[\e^{-\eta Z_t}]$. Taking logarithms thus gives $\hat V_t(s_0)\geq-\eta^{-1}\log\E_{\underline P_t,\pi_t}[\e^{-\eta Z_t}]$.

Using $Z_t\geq0$, $\log x\leq x-1$, and $\e^{-x}\leq1-x+x^2/2$ for $x\geq0$, we obtain
\begin{align}
V^{\underline P_t,\pi_t}(s_0;\hat\ell_t)-\hat V_t(s_0)
&\leq\E_{\underline P_t,\pi_t}\brk*{Z_t}+\frac1\eta\log\E_{\underline P_t,\pi_t}\brk*{\e^{-\eta Z_t}}\\
&\leq\frac1\eta\E_{\underline P_t,\pi_t}
\brk*{\eta Z_t+\e^{-\eta Z_t}-1}\leq\frac{\eta}{2}\E_{\underline P_t,\pi_t}\brk*{Z_t^2}.\label{eq:secondmoment}
\end{align}
Expanding $Z_t^2$ gives
\begin{align}
\E_{\underline P_t,\pi_t}\brk*{Z_t^2}&=\sum_{s,a}q^{\underline P_t,\pi_t}(s,a)\hat\ell_t(s,a)^2\\
&\qquad+2\sum_{0\leq h<k<H}\sum_{(s,a)\in\calS_h\times\calA}
q^{\underline P_t,\pi_t}(s,a)\hat\ell_t(s,a)\sum_{(s',a')\in\calS_k\times\calA}
q^{\underline P_t,\pi_t}(s',a'\mid s,a)\hat\ell_t(s',a')\\
&\leq\sum_{s,a}\hat\ell_t(s,a)+2\sum_{k=1}^{H-1}k\sum_{(s,a)\in\calS_k\times\calA}
\pi_t(a\mid s)\hat\ell_t(s,a)\\
&\leq\sum_{s,a}\hat\ell_t(s,a)
+2H\sum_{s,a}\pi_t(a\mid s)\hat\ell_t(s,a)\\
&=\sum_{s,a}\prn*{1+2H\pi_t(a\mid s)}\hat\ell_t(s,a),
\label{eq:trajectorymoment}
\end{align}
where the first inequality follows from 
$q^{\underline P_t,\pi_t}(s,a)\hat\ell_t(s,a)=q^{\underline P_t,\pi_t}(s,a)\I_t(s,a)\ell_t(s,a)/(\overline q_t^{\pi_t}(s,a)+\gamma)\leq\I_t(s,a)\leq1$
with $\sum_{(s,a)\in\calS_h\times\calA}\I_t(s,a)\leq1$ and $q^{\underline P_t,\pi_t}(s',a'\mid s,a)\leq\pi_t(a'\mid s')$.

Summing \cref{eq:secondmoment,eq:trajectorymoment} over $t$ gives
\begin{align}
\textup{\textsc{Stability}}
&\leq\frac\eta2\sum_{t=1}^T\sum_{s,a}
(1+2H\pi_t(a\mid s))\ell_t(s,a)+\frac\eta2\sum_{t=1}^T\sum_{s,a}(1+2H\pi_t(a\mid s))\prn*{\hat\ell_t(s,a)-\ell_t(s,a)}.
\end{align}
The first term is at most $\eta ST(A+2H)/2$ since $\ell_t\leq1$ and $\sum_a\pi_t(a\mid s)=1$.
For the second term, the concentration bound obtained by applying \cref{lem:jointIX} in \cref{app:concentration} with $\alpha_t(s,a)=2\gamma(1+2H\pi_t(a\mid s))/(2H+1)$, together with $q^{\pi_t}(s,a)\leq\overline q_t^{\pi_t}(s,a)$, gives, with probability at least $1-\delta$,
\begin{align}\label{eq:weightederror}
\sum_{t=1}^T\sum_{s,a}(1+2H\pi_t(a\mid s))
\prn*{\hat\ell_t(s,a)-\ell_t(s,a)}
\leq\frac{2H^2+H}{2\gamma}\log\prn*{\frac H\delta}
\leq\frac{3H^2}{2\gamma}\log\prn*{\frac H\delta}.
\end{align}
These two bounds yield
\begin{align}
\textup{\textsc{Stability}}
&\leq\frac{\eta ST(A+2H)}{2}+\frac{3\eta H^2}{4\gamma}\log\prn*{\frac{H}{\delta}}. \label{eq:stabilitysum}
\end{align}
Combining \cref{eq:stabilitysum} with \cref{eq:penaltystability,eq:penaltybound} completes the proof.
\end{proof}

Combining \cref{lem:unknownbias1,lem:unknownbias2,lem:unknownreg}, using \cref{lem:widthappendix} for unknown transitions, and choosing $\gamma=\eta$ as in \cref{thm:known,thm:unknown} proves the theorems.
The full proofs are given in \cref{app:unknownregret}.

\section{Linear-mixture MDPs}\label{sec:linear-mixture}
In this section, we extend \cref{alg:po} to adversarial MDPs with linear-mixture transitions~\citep{ayoub2020model,zhou2021nearly}.
We use the following feature normalization assumption, following \citet{lzz2024} and \citet{cheng2026policy}.
\begin{assumption}[Linear-mixture transitions]\label{ass:lm}

Let $\phi(s'\mid s,a)\in\R^d$ be known features with $\nrm{\phi(s'\mid s,a)}_2\leq1$, and let $B\geq1$ be a known constant. For each $h=0,\ldots,H-1$, there exists an unknown $\theta_h\in\R^d$ with $\nrm{\theta_h}_2\leq B$ such that $P(s'\mid s,a)=\inpr{\phi(s'\mid s,a),\theta_h}$ for all $(s,a,s')\in\calS_h\times\calA\times\calS_{h+1}$.
\end{assumption}
An MDP with a transition kernel of this form is called a linear-mixture MDP.

Under \cref{ass:lm}, we replace the confidence set of \cref{sec:confidence} with one built from an estimate of $\theta_h$.
Before episode $t$, we compute the following vector least-squares estimator~\citep{lzz2024,cheng2026policy}:
\begin{align}
\Lambda_{t,h}
&=I_d+\sum_{\tau<t}\sum_{s'\in\calS_{h+1}}\phi(s'\mid s_{\tau,h},a_{\tau,h})\phi(s'\mid s_{\tau,h},a_{\tau,h})^\top,
\label{eq:lm-gram}\\
\widehat\theta_{t,h}
&=\Lambda_{t,h}^{-1}\sum_{\tau<t}
\phi(s_{\tau,h+1}\mid s_{\tau,h},a_{\tau,h}).
\label{eq:lm-estimator}
\end{align}
Then, following \citet[Eq.~(4)]{cheng2026policy}, define the confidence radius $\beta_t$ as
\begin{align}\label{eq:lm-radius}
\beta_t
=B+\sqrt{2\log\prn*{\frac{H}{\delta}}
+d\log\prn*{4+\frac{4S(t-1)}{d}}},
\end{align}
the parameter confidence set
\begin{align}\label{eq:lm-ellipsoid}
\Theta_{t,h}
=\set*{\theta\in\R^d:
\nrm{\theta-\widehat\theta_{t,h}}_{\Lambda_{t,h}}\leq\beta_t},
\end{align}
and, for each $h$ and $(s,a)\in\calS_h\times\calA$, the confidence set
\begin{align}
\calP_t(s,a)=\Bigl\{\widetilde P(\cdot\mid s,a)\in\Delta(\calS_{h+1}) \colon \exists\theta\in\Theta_{t,h},\widetilde P(s'\mid s,a)=\inpr{\phi(s'\mid s,a),\theta}\ \forall s'\in\calS_{h+1}\Bigr\},
\label{eq:lm-row-set}
\end{align}
with $\calP_t=\prod_{s,a}\calP_t(s,a)$.
By \citet[Lemma~12]{cheng2026policy}, $P\in\calP_t$ for all $t\in[T]$ with probability at least $1-\delta$.
Since $\theta$ is chosen separately for each $(s,a)$, $\calP_t$ is rectangular, and we use it in \cref{alg:po} in place of \cref{eq:confrow}.
If $\calP_t(s,a)$ is empty for some $(s,a)$, we set $\calP_t(s,a)=\Delta(\calS_{h+1})$.
The following lemma is the counterpart of \cref{lem:widthappendix} for these confidence sets. 
\begin{lemma}[{\citealt[Lemma~16]{cheng2026policy}}]\label{lem:lm-confidence-width}
Suppose that \cref{ass:lm} holds and $P\in\calP_t$ for all $t\in[T]$. Then, with probability at least $1-\delta$,
\begin{align}
\sum_{t=1}^T\sum_{s,a}
\prn*{\overline q_t^{\pi_t}(s,a)-\underline q_t^{\pi_t}(s,a)}\lesssim dS^{3/2}\sqrt{HT}\log\prn*{\frac{ST}{\delta}}.
\end{align}
\end{lemma}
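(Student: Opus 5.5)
The plan is to follow the standard occupancy-width argument of \citet[Lemma~4]{jin2020learning} and \citet{lancewicki2022cooperative}. The change is that the per-$(s,a)$ $\ell_1$ confidence width of the tabular sets is replaced by an elliptical width coming from $\Theta_{t,h}$. The argument has three steps.

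\textbf{Step 1: reduce the occupancy width to transition errors.} For any $\widetilde P\in\calP_t$, I would write $q^{\widetilde P,\pi_t}(s)-q^{\pi_t}(s)$ by the usual telescoping (simulation) identity:
\begin{align}
q^{\widetilde P,\pi_t}(s)-q^{\pi_t}(s)=\sum_{h<h(s)}\sum_{(u,b)\in\calS_h\times\calA}q^{\pi_t}(u,b)\sum_{w}\prn*{\widetilde P(w\mid u,b)-P(w\mid u,b)}q^{\widetilde P,\pi_t}(s\mid w).
\end{align}
I would bound $\abs{\overline q_t^{\pi_t}(s)-\underline q_t^{\pi_t}(s)}$ by twice the maximum of this expression over $\widetilde P\in\calP_t$ (using $P\in\calP_t$). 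Summing over $s$ and $a$, the factor $q^{\widetilde P,\pi_t}(s\mid w)$ sums to at most one per later layer, so summing over $s$ gives at most $H$. This yields the bound
\begin{align}
\lesssim H\sum_{t}\sum_{h}\E_{\pi_t}\Bigl[\max_{\widetilde P\in\calP_t}\nrm{\widetilde P(\cdot\mid s_h,a_h)-P(\cdot\mid s_h,a_h)}_1\Bigr].
\end{align}
This is crude: it loses $\sqrt H$ relative to the claimed $\sqrt{H}$ dependence. I would therefore refine it as in \citet{jin2020learning}. Instead of taking the $\ell_1$ norm, I keep the per-$s'$ structure, and the sum over the target states $s$ in a layer then costs a factor $S$ rather than $H$.

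\textbf{Step 2: bound the per-$(s,a)$ transition error.} For $\widetilde P(s'\mid s,a)=\inpr{\phi(s'\mid s,a),\theta}$ with $\theta\in\Theta_{t,h}$ and $P\in\calP_t$, Cauchy--Schwarz gives
\begin{align}
\abs{\widetilde P(s'\mid s,a)-P(s'\mid s,a)}\le 2\beta_t\nrm{\phi(s'\mid s,a)}_{\Lambda_{t,h}^{-1}}.
\end{align}
Summing over $s'\in\calS_{h+1}$ and applying Cauchy--Schwarz over $s'$ gives
\begin{align}
\le 2\beta_t\sqrt{S}\,\Bigl(\sum_{s'}\nrm{\phi(s'\mid s,a)}^2_{\Lambda_{t,h}^{-1}}\Bigr)^{1/2}.
\end{align}
Each term is also trivially at most $2$, so I clip with $\min\{\cdot,1\}$.

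\textbf{Step 3: martingale replacement and elliptical potential.} I would replace the expectation under $\pi_t$ by the realized trajectory $(s_{t,h},a_{t,h})$ using Azuma's inequality. This costs $H\sqrt{T\log(1/\delta)}$ times the ranges, which is lower order. The realized terms are then summed via the elliptical potential lemma for the Gram matrix in \cref{eq:lm-gram}, whose increments are $\sum_{s'}\phi\phi^\top$ with trace at most $S$. This gives
\begin{align}
\sum_t\min\Bigl\{1,\sum_{s'}\nrm{\phi(s'\mid s_{t,h},a_{t,h})}^2_{\Lambda_{t,h}^{-1}}\Bigr\}\lesssim d\log(1+ST/d).
\end{align}
Combining this with Cauchy--Schwarz over $t$ gives a per-layer contribution of $\beta_T\sqrt{S}\sqrt{Td\log}$. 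Summing over $h$ and multiplying by the $S$ factor from Step 1 gives order $S^{3/2}\beta_T\sqrt{dT}\cdot\sqrt H$. With $\beta_T\approx\sqrt d$, the result is $dS^{3/2}\sqrt{HT}\log(ST/\delta)$.

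\textbf{Main obstacle.} I expect the hardest part to be getting exactly $\sqrt H$ rather than $H$ in Step 1, which means avoiding the crude factor-$H$ loss. To handle it, I would sum over layers with Cauchy--Schwarz across $h$, combining $\sum_h$ with the $\sqrt{T}$ into $\sqrt{HT}$, while charging the downstream factor $\sum_{s}q(s\mid w)\le S$ only once. If this fails, I would fall back on the second-order expansion of \citet[Lemma~E.5]{lancewicki2022cooperative}. In that expansion the cross term, which is quadratic in the confidence widths, is lower order.
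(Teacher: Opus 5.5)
The paper does not prove this lemma; it imports it from \citet[Lemma~16]{cheng2026policy}. Your argument therefore has to stand on its own, and as written it does not produce the $\sqrt H$. Your own Step~3 arithmetic gives $H$. Each layer contributes $\beta_T\sqrt S\sqrt{dT\log(1+ST/d)}$, so summing over the $H$ layers and multiplying by the target-state factor $S$ gives $dHS^{3/2}\sqrt T$ up to logarithms.

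The remedy in your last paragraph does not fix this. For $(u,b)\in\calS_h\times\calA$, write $x_t(u,b)=\sum_{w\in\calS_{h+1}}\|\phi(w\mid u,b)\|^2_{\Lambda_{t,h}^{-1}}$. Each layer has its own $\theta_h$ and its own Gram matrix $\Lambda_{t,h}$, so the elliptical potential lemma only gives $\sum_h\sum_t\min\{1,x_t(s_{t,h},a_{t,h})\}\lesssim Hd\log(1+ST/d)$. Cauchy--Schwarz over $(t,h)$ then returns $\sqrt{HT}\cdot\sqrt{Hd\log(1+ST/d)}$, which is the same factor $H$.

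The missing idea is layer-size bookkeeping. The Cauchy--Schwarz over $s'$ in Step~2 runs only over $\calS_{h+1}$, so it gives $\sqrt{\abs{\calS_{h+1}}}$ rather than $\sqrt S$. The target states affected by an error at layer $h$ lie in layers $k>h$, whose sizes sum to at most $S$. Together with
\begin{align}
\sum_{h=0}^{H-1}\sqrt{\abs{\calS_{h+1}}}\le\sqrt{H\sum_{h=0}^{H-1}\abs{\calS_{h+1}}}\le\sqrt{HS},
\end{align}
this yields $S\cdot\sqrt{HS}\cdot\beta_T\sqrt{dT\log(1+ST/d)}\lesssim dS^{3/2}\sqrt{HT}\log(ST/\delta)$. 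The Azuma term $S\beta_T\sqrt{HS}\sqrt{T\log(H/\delta)}$ is of lower order.

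Two smaller points. First, in your crude Step~1, the kernels attaining $\overline q_t^{\pi_t}(s)$ and $\underline q_t^{\pi_t}(s)$ differ across $s$, so $\sum_{s\in\calS_k}q^{\widetilde P,\pi_t}(s\mid w)\le1$ is not available. Your refined version is the right route. Bound each target state separately using only $q^{\widetilde P,\pi_t}(s\mid w)\in[0,1]$, together with $\abs{\sum_{w}(\widetilde P(w\mid u,b)-P(w\mid u,b))V(w)}\le\min\{2,\,2\beta_t\sqrt{\abs{\calS_{h+1}}}\sqrt{x_t(u,b)}\}$. This bound holds uniformly over $\widetilde P\in\calP_t$ and $V\in[0,1]^{\calS_{h+1}}$, so no second-order expansion is needed.

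Second, your fallback is not a safe alternative. In the tabular proof the cross term is lower order because Bernstein widths carry a factor $\sqrt{P(w\mid u,b)}$, which lets AM--GM turn the product of two widths into an expectation under the true occupancy. The ellipsoids $\Theta_{t,h}$ provide no such factor. Moreover, the first-order term of that route, computed with $\sqrt S$ per layer, is $dH^2\sqrt{ST}$. This exceeds the target when $S<H^{3/2}$, so that route needs the same layer-size refinement anyway.
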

Combining \cref{lem:lm-confidence-width} with the regret analysis in \cref{sec:known}, we obtain the following bound in the same way as \cref{thm:known,thm:unknown}. The proof is given in \cref{app:lm-proof}.
\begin{theorem}\label{thm:lm}
Under \cref{ass:lm}, \cref{alg:po} with the confidence sets \eqref{eq:lm-row-set} and $\gamma=\eta=\sqrt{\frac{H\log(SAT/\delta)}{S(H+A)T}}$ satisfies, with probability at least $1-5\delta$,
\begin{align}
\operatorname{Reg}_T\lesssim\sqrt{HSAT\log\prn*{\frac{SAT}{\delta}}}+dS^{3/2}\sqrt{HT}\log\prn*{\frac{SAT}{\delta}}.
\end{align}
\end{theorem}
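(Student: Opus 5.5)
The plan is to reuse the decomposition \cref{eq:unknowndecomp} verbatim. \cref{lem:unknownbias1,lem:unknownbias2,lem:unknownreg} only require two properties of the confidence sets. First, $\calP_t$ must be rectangular, so that $\underline P_t$ exists and attains the minima in \cref{eq:optimistictransition} for all $(s,a)$ simultaneously. Second, the event $P\in\calP_t$ must hold for all $t\in[T]$. Both properties hold for the linear-mixture sets \eqref{eq:lm-row-set}. Rectangularity holds by construction, since $\theta$ is chosen per $(s,a)$, and replacing empty rows by $\Delta(\calS_{h+1})$ preserves it. Coverage holds with probability at least $1-\delta$ by \citet[Lemma~12]{cheng2026policy}. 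I will check that the proofs of the three lemmas never use the specific Bernstein form of \eqref{eq:confrow}. The upper and lower occupancies $\overline q_t^{\pi_t},\underline q_t^{\pi_t}$ are still defined by \eqref{eq:occupancy} as max/min over $\calP_t$. The only additional point is that $q^{\pi_t}\le\overline q_t^{\pi_t}$ and $\underline q_t^{\pi_t}\le q^{\underline P_t,\pi_t}$, and both follow from $P,\underline P_t\in\calP_t$.

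Conditioning on the coverage event, I combine the following bounds. \cref{lem:unknownbias1} bounds $\textsc{Bias}_1$ by the occupancy width plus $\gamma SAT+H\sqrt{2T\log(1/\delta)}$. \cref{lem:lm-confidence-width} bounds the width by $dS^{3/2}\sqrt{HT}\log(ST/\delta)$. \cref{lem:unknownbias2} gives $\textsc{Bias}_2\le \frac{H}{2\gamma}\log(SA/\delta)$. \cref{lem:unknownreg} gives $\textsc{Reg}\le \frac{H\log A}{\eta}+\frac{\eta ST(A+2H)}{2}+\frac{3\eta H^2}{4\gamma}\log(H/\delta)$. A union bound over coverage ($\delta$), Bias$_1$ ($\delta$), the width lemma ($\delta$), Bias$_2$ ($\delta$), and Reg ($\delta$) yields the claimed probability $1-5\delta$.

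Next I plug in $\gamma=\eta=\sqrt{H\log(SAT/\delta)/(S(H+A)T)}$ and let $L=\log(SAT/\delta)$. The terms $\gamma SAT$, $\frac{H}{\gamma}L$, $\frac{H\log A}{\eta}$, and $\eta ST(A+2H)$ are each $\lesssim\sqrt{HS(H+A)TL}$. The term $\frac{3\eta H^2}{4\gamma}\log(H/\delta)$ becomes $O(H^2L)$ since $\gamma=\eta$. The martingale term $H\sqrt{T\log(1/\delta)}$ is dominated as well.

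The main obstacle is that the final statement has leading term $\sqrt{HSAT L}$ rather than $\sqrt{HS(H+A)TL}$, so I must absorb $\sqrt{H^2STL}$ and $H^2L$ into the transition term $dS^{3/2}\sqrt{HT}\,L$. This works because $d\ge1$ and $H\le S$. The first gives $H\sqrt{ST}\le S^{3/2}\sqrt{HT}$, as $H\sqrt{S}\le S\sqrt{H}\cdot\sqrt{S}$ reduces to $\sqrt H\le S$. For the second, if $H^2L\le\sqrt{HS(H+A)TL}$ it is already absorbed; otherwise $T\lesssim H^3L/S$. In that regime the trivial bound $\operatorname{Reg}_T\le HT\lesssim H^2L\cdot(H^2/S)$ should be compared with the stated bound, which I would handle by noting that $\operatorname{Reg}_T\le HT\le H\sqrt{T}\sqrt{T}$ and that small $T$ makes $dS^{3/2}\sqrt{HT}L\ge HT$ whenever $T\le dS^3L^2/H$, which covers $T\lesssim H^3L/S$ since $H\le S$. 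Finally, $\log(ST/\delta)\le L$, and collecting terms gives the bound in \cref{thm:lm}.
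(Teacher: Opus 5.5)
Your proposal is correct and follows the paper's proof. The lemmas behind \cref{eq:common_bound} need only rectangularity and coverage, the width is bounded by \cref{lem:lm-confidence-width}, the failure probabilities sum to $5\delta$, and $H\sqrt{ST\log(SAT/\delta)}$ and $H^2\log(SAT/\delta)$ are absorbed into the transition term using $H\le S$. Your case split via $\operatorname{Reg}_T\le HT$ is valid but unnecessary, since $H^2\le S^{3/2}\sqrt{H}\le dS^{3/2}\sqrt{HT}$ holds directly for every $T\ge1$, which is how the paper absorbs the $H^2$ term.
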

As in \cref{thm:unknown}, the second term comes from the occupancy width in \cref{lem:lm-confidence-width}, and it improves the horizon dependence of the policy optimization bound of \citet{cheng2026policy} by a factor of $H$.
The rate matches the best-known bound $\tilO(\sqrt{HSAT}+dS^{3/2}\sqrt{HT})$ up to logarithmic factors, achieved by the occupancy-measure method of \citet{lzz2024}.

\section{Conclusion}\label{sec:conclusion}
We introduced a new policy optimization framework based on regularized $Q$-functions that improves the horizon dependence of regret bounds while retaining local policy updates.
For tabular MDPs with unknown transitions, our algorithm closes the horizon gap between policy optimization and the best available regret bounds achieved by global optimization.
We believe that this framework can also reduce the horizon dependence of policy optimization under delayed feedback and in best-of-both-worlds settings, and leave these extensions for future work.

\section*{AI use statement}
We used GPT-6-Astra, GPT-6.1-Sol, and Claude Opus 5.5 to assist with proof development and checking, literature searches, and editing.
We reviewed all AI-assisted work and take responsibility for the paper.

\bibliographystyle{plainnat}
\bibliography{ref}

\clearpage
\appendix
\crefalias{subsection}{appendix}

\section{Proofs deferred from \NoCaseChange{\protect\cref{sec:algorithm}}}\label{app:unknownregret}
We prove \cref{thm:known,thm:unknown} and the supporting \cref{lem:unknownbias1,lem:unknownbias2}.

\begin{proof}[Proof of \cref{lem:unknownbias1}]
Let $Y_t=\inpr{\underline q_t^{\pi_t},\hat\ell_t}$. Decompose
\begin{align}
\textup{\textsc{Bias}}_1
&=\sum_{t=1}^T\prn*{\inpr{q^{\pi_t},\ell_t}
-\inpr*{\underline q_t^{\pi_t},\E_t\brk*{\hat\ell_t}}}+\sum_{t=1}^T\prn*{\E_t\brk*{Y_t}-Y_t}.
\end{align}
For the first term,
\begin{align}
q^{\pi_t}(s,a)\ell_t(s,a)-\underline q_t^{\pi_t}(s,a)
\E_t\brk*{\hat\ell_t(s,a)}
&=\frac{q^{\pi_t}(s,a)\ell_t(s,a)}
{\overline q_t^{\pi_t}(s,a)+\gamma}\prn*{
\overline q_t^{\pi_t}(s,a)-\underline q_t^{\pi_t}(s,a)+\gamma}\\
&\leq\overline q_t^{\pi_t}(s,a)
-\underline q_t^{\pi_t}(s,a)+\gamma,
\end{align}
where the inequality follows from $q^{\pi_t}(s,a)\leq\overline q_t^{\pi_t}(s,a)$.
For the second term,
\begin{align}
0\leq Y_t=\sum_{s,a}\I_t(s,a)
\frac{\underline q_t^{\pi_t}(s,a)\ell_t(s,a)}{\overline q_t^{\pi_t}(s,a)+\gamma}\leq\sum_{s,a}\I_t(s,a)=H.
\end{align}
Hence $\abs*{\E_t\brk*{Y_t}-Y_t}\leq H$. Applying \cref{lem:hoeffding} to $X_t=\E_t\brk*{Y_t}-Y_t$ gives, with probability at least $1-\delta$,
\begin{align}
\sum_{t=1}^T\prn*{\E_t\brk*{Y_t}-Y_t}
\leq H\sqrt{2T\log\prn*{\frac1\delta}}.
\end{align}
Combining the two bounds completes the proof.
\end{proof}

\begin{proof}[Proof of \cref{lem:unknownbias2}]
By the proof of \citet[Lemma~14]{jin2020learning}, with probability at least $1-\delta$,
\begin{align}
\sum_{t=1}^T\prn*{\hat\ell_t(s,a)-\ell_t(s,a)}\leq\frac{1}{2\gamma}\log\prn*{\frac{SA}{\delta}}
\end{align}
holds for all $(s,a)$. Then,
\begin{align}
\textup{\textsc{Bias}}_2=\sum_{s,a}q^{\pio}(s,a)\sum_{t=1}^T
\prn*{\hat\ell_t(s,a)-\ell_t(s,a)}
\leq\frac{H}{2\gamma}\log\prn*{\frac{SA}{\delta}},
\end{align}
which completes the proof.
\end{proof}

\begin{proof}[Proof of \cref{thm:known,thm:unknown}]
If $P\in\calP_t$ for all $t\in[T]$, then the regret decomposition in \cref{eq:unknowndecomp} and \cref{lem:unknownbias1,lem:unknownbias2,lem:unknownreg} give, with probability at least $1-3\delta$,
\begin{align}\label{eq:regcombined}
\operatorname{Reg}_T
&\lesssim\frac{H\log A}{\eta}+\gamma SAT+\eta S(H+A)T+H\sqrt{T\log\prn*{\frac{SA}{\delta}}}
+\frac{\eta H^2+H}{\gamma}\log\prn*{\frac{SA}{\delta}}\\
&\quad+\sum_{t=1}^T\sum_{s,a}\prn*{\overline q_t^{\pi_t}(s,a)
-\underline q_t^{\pi_t}(s,a)}. \label{eq:common_bound}
\end{align}
For known transitions, $\calP_t=\{P\}$ and the last term is zero. Choosing $\gamma=\eta=\sqrt{\frac{H\log(SA/\delta)}{S(H+A)T}}$ proves \cref{thm:known}.

For unknown transitions, \citet[Lemma~2]{jin2020learning} gives $P\in\calP_t$ for all $t\in[T]$ with probability at least $1-4\delta$. Applying \cref{lem:widthappendix} to the last term in \cref{eq:common_bound} and choosing $\gamma=\eta=\sqrt{\frac{H\log(SAT/\delta)}{S(H+A)T}}$ proves \cref{thm:unknown}.
\end{proof}
\section{Proofs deferred from \NoCaseChange{\protect\cref{sec:linear-mixture}}}\label{app:lm-proof}
\begin{proof}[Proof of \cref{thm:lm}]
By \cref{sec:linear-mixture}, $P\in\calP_t$ for all $t\in[T]$ with probability at least $1-\delta$. Then \cref{eq:common_bound}, \cref{lem:lm-confidence-width}, and the choice of $\gamma=\eta$ give, with probability at least $1-5\delta$,
\begin{align}
\operatorname{Reg}_T
&\lesssim\sqrt{HS(H+A)T\log\prn*{\frac{SAT}{\delta}}}+H^2\log\prn*{\frac{SAT}{\delta}}+dS^{3/2}\sqrt{HT}\log\prn*{\frac{ST}{\delta}}\\
&\lesssim\sqrt{HSAT\log\prn*{\frac{SAT}{\delta}}}+dS^{3/2}\sqrt{HT}\log\prn*{\frac{SAT}{\delta}},
\end{align}
where the last inequality follows from $H\sqrt{ST}+H^2\lesssim dS^{3/2}\sqrt{HT}$, which holds since $H\leq S$. This completes the proof.
\end{proof}

\section{Concentration inequalities}\label{app:concentration}
We provide the concentration inequalities used in the regret proofs.

\begin{lemma}[Azuma--Hoeffding inequality]\label{lem:hoeffding}
Let $X_t$ be $\calF_{t+1}$-measurable with $\E_t\brk*{X_t}=0$ and $\abs{X_t}\leq R$ almost surely for every $t$. Then for any $\delta\in(0,1)$, with probability at least $1-\delta$,
\begin{align}
\sum_{t=1}^TX_t\leq R\sqrt{2T\log\prn*{\frac{1}{\delta}}}.
\end{align}
\end{lemma}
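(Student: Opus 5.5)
We prove the Azuma--Hoeffding bound of \cref{lem:hoeffding} by the exponential-moment (Chernoff) method applied to the martingale $M_T=\sum_{t=1}^TX_t$. Fix $\lambda>0$. The first step is a conditional moment-generating-function bound: since $X_t$ is $\calF_{t+1}$-measurable with $\E_t\brk*{X_t}=0$ and $X_t\in[-R,R]$ almost surely, Hoeffding's lemma applied conditionally on $\calF_t$ gives
\begin{align}
\E_t\brk*{\e^{\lambda X_t}}\leq\e^{\lambda^2(2R)^2/8}=\e^{\lambda^2R^2/2}.
\end{align}
To verify Hoeffding's lemma in this conditional form, we use convexity of $x\mapsto\e^{\lambda x}$ on $[-R,R]$. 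This yields $\e^{\lambda x}\leq\frac{R-x}{2R}\e^{-\lambda R}+\frac{R+x}{2R}\e^{\lambda R}$. Taking $\E_t$ and using the zero conditional mean gives $\E_t\brk*{\e^{\lambda X_t}}\leq\cosh(\lambda R)\leq\e^{\lambda^2R^2/2}$, where the last step is the termwise comparison of the Taylor series, $(2k)!\geq 2^kk!$.

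Next, we iterate by the tower property. Since $M_{t-1}$ is $\calF_t$-measurable,
\begin{align}
\E\brk*{\e^{\lambda M_t}}=\E\brk*{\e^{\lambda M_{t-1}}\E_t\brk*{\e^{\lambda X_t}}}\leq\e^{\lambda^2R^2/2}\E\brk*{\e^{\lambda M_{t-1}}},
\end{align}
and induction gives $\E\brk*{\e^{\lambda M_T}}\leq\e^{T\lambda^2R^2/2}$. Markov's inequality then yields, for any $\epsilon>0$,
\begin{align}
\Pr\brk*{M_T\geq\epsilon}\leq\exp\prn*{-\lambda\epsilon+T\lambda^2R^2/2}.
\end{align}

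Finally, we optimize the exponent by choosing $\lambda=\epsilon/(TR^2)$, which gives the tail bound $\exp(-\epsilon^2/(2TR^2))$. Setting this equal to $\delta$ gives $\epsilon=R\sqrt{2T\log(1/\delta)}$, which is the claimed bound. The only step requiring care is the conditional form of Hoeffding's lemma, namely that the bound holds almost surely given $\calF_t$. This is immediate from the convexity argument above, since that argument uses only the pointwise range of $X_t$ and its conditional mean. The remaining steps are routine.
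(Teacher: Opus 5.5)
Your proof is correct. It is the standard Chernoff-method argument: a conditional Hoeffding lemma via convexity and $\cosh y\leq\e^{y^2/2}$, the tower property using that $\sum_{s<t}X_s$ is $\calF_t$-measurable, Markov's inequality, and the choice $\lambda=\epsilon/(TR^2)$. The paper states this lemma without proof as a standard fact, so there is no separate route to compare. The only omission is the degenerate case $R=0$, where your $\lambda$ is undefined but $X_t=0$ almost surely and the claim is immediate.
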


\begin{lemma}[{\citealt[Lemma~11]{jin2020learning}}]\label{lem:jointIX}
For any sequence of $\calF_t$-measurable functions $\alpha_t:\calS\times\calA\to[0,2\gamma]$, with probability at least $1-\delta$,
\begin{align}\label{eq:jointIXlayer}
\sum_{t=1}^T\sum_{s,a}\alpha_t(s,a)\prn*{
\hat\ell_t(s,a)
-\frac{q^{\pi_t}(s,a)}{\overline q_t^{\pi_t}(s,a)}
\ell_t(s,a)}
\leq H\log\prn*{\frac H\delta}.
\end{align}
\end{lemma}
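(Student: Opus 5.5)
This is the implicit-exploration concentration bound of \citet[Lemma~11]{jin2020learning}. I would prove it with a supermartingale argument applied one layer at a time, followed by a union bound over the $H$ layers; this is where the $\log(H/\delta)$ comes from.

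\textbf{Step 1 (deterministic inequality).} Fix a layer $h$ and write $u_t(s,a)=\overline q_t^{\pi_t}(s,a)$. Using $\frac{z}{1+z}\leq\log(1+z)$ for $z\geq0$, and noting that $\I_t(s,a)\ell_t(s,a)\in[0,1]$ while $\alpha_t\leq 2\gamma$, I would show
\begin{align}
\alpha_t(s,a)\hat\ell_t(s,a)\leq\frac{\alpha_t(s,a)\I_t(s,a)\ell_t(s,a)}{u_t(s,a)+\tfrac{\alpha_t(s,a)}{2}\I_t(s,a)\ell_t(s,a)}
\leq 2\log\prn*{1+\frac{\alpha_t(s,a)\I_t(s,a)\ell_t(s,a)}{2u_t(s,a)}}.
\end{align}
The first inequality holds because the denominator $u_t+\gamma$ in $\hat\ell_t$ dominates $u_t+\frac{\alpha_t}{2}\I_t\ell_t$.

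\textbf{Step 2 (one episode, one layer).} At most one pair in $\calS_h\times\calA$ is visited per episode. Therefore the sum over $(s,a)\in\calS_h\times\calA$ of these logarithms equals $\log\bigl(1+\sum_{s,a}\frac{\alpha_t\I_t\ell_t}{2u_t}\bigr)$. Exponentiating and taking $\E_t$ with $\E_t[\I_t(s,a)]=q^{\pi_t}(s,a)$, I get
\begin{align}
\E_t\brk*{\exp\prn*{\tfrac12\sum_{(s,a)\in\calS_h\times\calA}\alpha_t\hat\ell_t}}
\leq 1+\sum_{s,a}\frac{\alpha_t q^{\pi_t}\ell_t}{2u_t}
\leq\exp\prn*{\tfrac12\sum_{s,a}\alpha_t\frac{q^{\pi_t}}{u_t}\ell_t}.
\end{align}
Here $\alpha_t$ and $u_t$ are $\calF_t$-measurable. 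Hence $\exp\bigl(\tfrac12\sum_{\tau\le t}\sum_{\calS_h\times\calA}\alpha_\tau(\hat\ell_\tau-\frac{q^{\pi_\tau}}{u_\tau}\ell_\tau)\bigr)$ is a nonnegative supermartingale with initial value $1$.

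\textbf{Step 3 (Markov and union bound).} Markov's inequality gives, with probability at least $1-\delta/H$, that the layer-$h$ sum is at most $2\log(H/\delta)$. A union bound over the $H$ layers and summing the layers gives $2H\log(H/\delta)$.

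The main obstacle is the constant. To obtain exactly $H\log(H/\delta)$ as stated, I would redo Step 1 with the normalization of Jin et al.: write $\hat\ell_t\leq\frac{1}{2\gamma}\log(1+2\gamma\I_t\ell_t/u_t)$ and use $\alpha_t/(2\gamma)\in[0,1]$ with $\log(1+cz)\ge c\log(1+z)$ for $c\in[0,1]$. The exponent then carries weight one instead of one half, which yields $\log(H/\delta)$ per layer. I would check this constant bookkeeping against the original lemma.
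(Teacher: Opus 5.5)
Your proposal is correct and follows essentially the same argument as the cited proof of \citet[Lemma~11]{jin2020learning}, which the paper does not reproduce. That argument builds a per-layer exponential supermartingale from the implicit-exploration inequality, uses that at most one pair per layer is visited, and finishes with Markov's inequality and a union bound over the $H$ layers; your refined normalization ($\alpha_t/(2\gamma)\in[0,1]$ with $c\log(1+z)\le\log(1+cz)$) is exactly what gives $H\log(H/\delta)$ instead of the $2H\log(H/\delta)$ of your Steps~1--3.

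One detail needs filling in. The bound $\hat\ell_t\le\frac{1}{2\gamma}\log\bigl(1+2\gamma\I_t\ell_t/u_t\bigr)$ requires the sharper inequality $\frac{z}{1+z/2}\le\log(1+z)$ for $z\ge0$, which gives $\frac{\I_t\ell_t}{u_t+\gamma\I_t\ell_t}\ge\hat\ell_t$. The bound $\frac{z}{1+z}\le\log(1+z)$ from Step~1, applied with $z=2\gamma\I_t\ell_t/u_t$, only yields $\frac{\I_t\ell_t}{u_t+2\gamma\I_t\ell_t}$, which need not dominate $\hat\ell_t$.
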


\end{document}